\def\argmax{\mathop{\rm arg\, max}}
\def\argmin{\mathop{\rm arg\, min}}
\def\real{\mathop{{\rm I}\kern-.2em\hbox{\rm R}}\nolimits}
\def\real{{\mathrm R}}
\def\supp{\hbox{\rm supp}}
\def\hhbeta{{\overline \beta}}
\def\hhsigma{{\overline \sigma}}
\def\bw{{w}}\def\bh{{h}}\def\bI{{I}}\def\bu{{u}}\def\bv{{v}}\def\bx{{x}}\def\bX{{X}}\def\by{{y}}
\def\bbeta{{\beta}}\def\hbbeta{{\hbeta}}
\newcommand{\lam}{\lambda}\def\hbeta{\widehat{\beta}}\def\hsigma{\widehat{\sigma}}
\def\sgn{\hbox{\rm sgn}}\def\pa{\partial}\def\scrC{{\mathscr C}}
\def\ep{\varepsilon}\def\eps{\epsilon}\def\Shat{\widehat{S}}
\def\wtil{{\widetilde w}}\def\Ttil{{\widetilde T}}\def\Ctil{{\widetilde C}}
\def\hlam{{\widehat\lam}}
\newcommand{\bel}{\begin{eqnarray}\label}
\newcommand{\eel}{\end{eqnarray}}
\newcommand{\bes}{\begin{eqnarray*}}
\newcommand{\ees}{\end{eqnarray*}}
\begin{document}

\jname{Biometrika}
%% The year, volume, and number are determined on publication
\jyear{yyyy}
\jvol{0}
\jnum{0}
%% The \doi{...} and \accessdate commands are used by the production team
%\doi{10.1093/biomet/asm023}
\accessdate{Advance Access publication on dd mmm yyyy}
\copyrightinfo{\Copyright\ 2008 Biometrika Trust\goodbreak {\em Printed in Great Britain}}

%% These dates are usually set by the production team
\received{mmm yyyy}
\revised{mmm yyyy}

%% The left and right page headers are defined here:
\markboth{T. Sun \and C.-H. Zhang}{Scaled Sparse Linear Regression}

%% Here are the title, author names and addresses
\title{Scaled Sparse Linear Regression}

\author{Tingni Sun \and Cun-Hui Zhang}
\affil{Department of Statistics and Biostatistics, Hill Center, Busch Campus,
Rutgers University, Piscataway, New Jersey 08854, U.S.A.
\email{tingni@stat.rutgers.edu} \email{czhang@stat.rutgers.edu} }

\maketitle

\begin{abstract}
{Scaled} sparse linear regression jointly estimates the regression coefficients and noise level in a linear model. It chooses an equilibrium with a sparse regression method by iteratively estimating the noise level via the mean residual square and scaling the penalty in proportion to the estimated noise level. The iterative algorithm costs little beyond the computation of a path or grid of the sparse regression estimator for penalty levels above a proper threshold. For the {scaled} lasso, the algorithm is a gradient descent in a convex minimization of a penalized joint loss function for the regression coefficients and noise level. Under mild regularity conditions, we prove that the scaled lasso simultaneously yields an estimator for the noise level and an estimated coefficient vector satisfying certain oracle inequalities for prediction, the estimation of the noise level and the regression coefficients. These inequalities provide sufficient conditions for the consistency and asymptotic normality of the noise level estimator, including certain cases where the number of variables is of greater order than the sample size.
Parallel results are provided for the least squares estimation after model selection by the scaled lasso.
Numerical results demonstrate the superior performance of the proposed methods over an earlier proposal of joint convex minimization.
\end{abstract}

\begin{keywords}
Convex minimization; estimation after model selection; iterative algorithm; linear regression;
oracle inequality; penalized least squares; scale invariance; variance estimation.
\end{keywords}

\section{Introduction}
This paper concerns the simultaneous estimation of the regression coefficients and noise level in a high-dimensional linear model. High-dimensional data analysis is a topic of great current interest due to the growth of applications where the number of unknowns far exceeds the number of data points. Among statistical models arising from such applications, linear regression is one of the best understood. Penalization, convex minimization and thresholding methods have been proposed, tested with real and simulated data, and proved to control errors in prediction, estimation and variable selection under various sets of regularity conditions. These methods typically require an appropriate penalty or threshold level. A larger penalty level may lead to a simple model with large bias, while a smaller penalty level may lead to a complex noisy model due to overfitting. Scale-invariance considerations and existing theory suggest that the penalty level should be proportional to the noise level of the regression model. In the absence of knowledge of the latter level, cross-validation is commonly used to determine the former. However, cross-validation is computationally costly and theoretically poorly understood, especially for the purpose of variable selection and the estimation of regression coefficients.
%%add
The penalty level selected by cross-validation is called the prediction-oracle in \citet{MeinshausenB06}, who gave an example to show that the prediction-oracle solution does not lead to consistent model selection for the lasso.

Estimation of the noise level in high-dimensional regression is interesting in its own right. Examples include quality control in manufacturing and risk management in finance.

Our study is motivated by \citet{StadlerBG10} and the comments on that paper by \citet{Antoniadis10} and \citet{SunZ10}. \citet{StadlerBG10} proposed to estimate the regression coefficients and noise level by maximizing their joint log-likelihood with an $\ell_1$ penalty on the regression coefficients. Their method has a unique solution due to the joint concavity of the log-likelihood under a certain transformation of the unknown parameters. \citet{SunZ10} proved that this penalized joint maximum likelihood estimator may result in a positive bias for the estimation of the noise level and compared it with two alternatives. The first is a one-step bias correction of the penalized joint maximum likelihood estimator. The second is an iterative algorithm that alternates between estimating the noise level via the mean residual square and scaling the penalty level in a predetermined proportion to the estimated noise level in the lasso or minimax concave penalized selection paths.
%Since the mc+ is an extension of the lasso,
%the second alternative is closely related to a proposal of Zhang (2007)
%where a degrees-of-freedom corrected estimate of the noise level is used in the iteration.
In a simulation experiment \citet{SunZ10} demonstrated the superiority of the iterative algorithm, compared with the penalized joint maximum likelihood estimator and its bias correction. However, no theoretical results were given for the iterative algorithm. \citet{Antoniadis10} commented on the same problem from a different perspective by raising the possibility of adding an $\ell_1$ penalty to Huber's concomitant joint loss function. See, for example, section 7.7 of \citet{HuberR09}. Interestingly, the minimizer of this
%alternative
penalized joint convex loss is identical to the equilibrium of the iterative algorithm for the lasso path. Thus, the convergence of the iterative algorithm is guaranteed by the convexity.
%Neither Antoniadis (2010) nor Sun and Zhang (2010) recommends the use of the method.

In this paper, we study \citet{SunZ10}'s iterative algorithm for the joint estimation of regression coefficients and the noise level. For the lasso, this is equivalent to jointly minimizing Huber's concomitant loss function with the $\ell_1$ penalty, as \citet{Antoniadis10} pointed out. For simplicity, we call the equilibrium of this algorithm
%%%as, by
the scaled version of the penalized regression method, for example the scaled lasso or scaled minimax concave penalized selection, depending on the choice of penalty function. Under mild regularity conditions, we prove
%certain
oracle inequalities for prediction and the joint estimation of the noise level and regression coefficients for the {scaled} lasso,
%These oracle inequalities
that imply the consistency and asymptotic normality of the scaled lasso estimator for the noise level.
In addition, we prove parallel oracle inequalities for the least squares estimation of the
regression coefficients and noise level after model selection by the scaled lasso.
We report numerical results on the performance of scaled lasso and other scaled penalized methods,
along with that of the corresponding least squares estimator after model selection.
These theoretical and numerical results support the use of the proposed method for high-dimensional regression.
%The choice of the penalty function will be discussed in Section 5.

We use the following notation throughout the paper. For a vector $\bv=(v_1,\dots,v_p)$, ${|}\bv{|}_q=(\sum_j|v_j|^q)^{1/q}$ denotes the $\ell_q$ norm with
%the special ${|}\bv{|}={|}\bv{|}_2$ and
the usual extensions ${|}\bv{|}_{\infty}=\max_j |v_j|$ and $|\bv|_0=\#\{j:v_j\neq 0\}$.
%%%
For design matrices $X$ and subsets $A$ of $\{1,\dots,p\}$,
$x_j$ denotes column vectors of $X$ and $\bX_A$ denotes the matrix composed of columns with indices in $A$.
%=(\bx_j,j\in A)$.
Moreover, $x_+=\max(x,0)$.

\section{An iterative algorithm}

%In this section, we describe the iterative algorithm for the joint estimation of
%regression coefficients and noise level and its connection to convex minimization.

Suppose we observe a design matrix $\bX=(\bx_1,\ldots,\bx_p)\in\real^{n\times p}$ and a response vector $\by\in\real^n$. For penalty functions $\rho(\cdot)$, consider penalized loss functions of the form
\bel{pen-loss-beta}
L_\lam(\bbeta)
= \frac{{|}\by-\bX\bbeta{|}_2^2}{2n} + \lam^2\sum_{j=1}^p \rho(|\beta_j|/\lam)
\eel
where $\bbeta = (\beta_1,\ldots,\beta_p)'$ is a vector of regression coefficients.
Let the penalty $\rho(t)$ be standardized to ${\dot\rho}(0+)=1$, where
${\dot\rho}(t)=(d/dt)\rho(t)$.
A vector $\hbbeta=(\hbeta_1,\ldots,\hbeta_p)'$
is a critical point of the penalized loss (\ref{pen-loss-beta}) if and only if
\bel{KKT}
\begin{cases}
\bx_j'(\by - \bX\hbbeta)/n = \lam\sgn(\hbeta_j){\dot\rho}(|\hbeta_j|/\lam), & \hbeta_j\neq 0, \\
\bx_j'(\by - \bX\hbbeta)/n \in \lam[-1,1], & \hbeta_j=0.
\end{cases}
\eel
If the penalized loss (\ref{pen-loss-beta}) is convex in $\bbeta$, then (\ref{KKT}) is the Karush--Kuhn--Tucker
%(KKT)
condition for its minimization.

Given a penalty function $\rho(\cdot)$, one still has to choose a penalty level $\lam$ to arrive at a solution
of (\ref{KKT}). Such a choice may depend on the purpose of estimation, since variable selection may require a larger $\lam$ than does prediction. However, scale-invariance considerations and theoretical results suggest using a penalty level proportional to the noise level $\sigma$. This motivates a scaled penalized least squares estimator as a numerical equilibrium in the following iterative algorithm:
\bel{gen-alg}
\begin{split}
&\hsigma \leftarrow {|}\by-\bX\hbbeta^{\mathrm{old}}{|}_2/\{(1-a)n\}^{1/2}, \\
&\lam \leftarrow \hsigma\lam_0, \\
&\hbbeta \leftarrow \hbbeta^{\mathrm{new}},\ L_\lam(\hbeta^{\mathrm{new}})
\le  L_\lam(\hbeta^{\mathrm{old}}),
\end{split}
\eel
where $\lam_0$ is a prefixed penalty level, not depending on $\sigma$, $\hsigma$ estimates the noise level, and $a\ge 0$ provides an option for a degrees-of-freedom adjustment with $a>0$. For $p<n$ and $(a,\lam_0)=(p/n,0)$, (\ref{gen-alg}) initialized with the least squares estimator
$\hbbeta^{\mathrm{(lse)}}$ is non-iterative and gives
$\hsigma^2={|}\by-\bX\hbbeta^{\mathrm{(lse)}}{|}_2^2/(n-p)$.
For large data sets, one may use a few passes of a gradient descent algorithm
to compute $\hbeta^{\mathrm{new}}$ from $\hbeta^{\mathrm{old}}$.
For $a=0$, this algorithm was considered in \citet{SunZ10}.
In \citet{SunZ10} and the numerical experiments reported in Section 4,
$\hbeta^{\mathrm{new}}$ is a solution of (\ref{KKT}) for the given $\lam$.
We describe this implementation in the following two paragraphs.

The first step of our implementation is the computation of a solution path
$\hbeta(\lam)$ of (\ref{KKT}) beginning from $\hbeta(\lam)=0$ for $\lam = |X'y/n|_\infty$.
For quadratic spline penalties $\rho(t)$ with $m$ knots, \citet{Zhang10-mc+} developed an algorithm to compute a linear spline path of solutions $\{\lam^{(t)}\oplus \hbbeta^{(t)}: t\ge 0\}$ of (\ref{KKT}) to cover the entire range of $\lam$. This extends the least angle regression solution or lasso path \citep{OsbornePT00a,OsbornePT00b,EfronHJT04} from $m=1$ and includes the minimax concave penalty
%(mcp)
for $m=2$ and the smoothly clipped absolute deviation penalty \citep{FanL01} for $m=3$.
An R package named plus is available for computing the solution paths for these penalties.

The second step of our implementation is the iteration (\ref{gen-alg}) along the solution path
$\beta(\lam)$ computed in the first step. That is to use the already computed
\bel{alg}
\hbeta^{\mathrm{new}} = \hbbeta(\lam)
\eel
in (\ref{gen-alg}).
For the {scaled} lasso, we use $a=0$ in (\ref{gen-alg}) and $\rho(t)=t$ in (\ref{pen-loss-beta}) and (\ref{KKT}). For the {scaled} minimax concave penalized selection, we use $a=0$ and the minimax concave penalty $\rho(t) = \int_0^t(1-x/\gamma)_+{\rm d}x$, where $\gamma>0$ regularizes the maximum concavity of the penalty. When $\gamma=\infty$, it becomes the {scaled} lasso. The algorithm (\ref{gen-alg}) can be easily implemented once a solution path is computed.

Consider the $\ell_1$ penalty. As discussed in the introduction, (\ref{gen-alg}) and (\ref{alg})
form an alternating minimization algorithm for the penalized joint loss function
\bel{pen-loss-joint}
%L_{\lam_0}(\bbeta,\sigma;\bX,\by) =
L_{\lam_0}(\bbeta,\sigma) =
\frac{{|}\by-\bX\bbeta{|}_2^2}{2n\sigma} + \frac{(1-a)\sigma}{2} +
\lam_0{|}\bbeta{|}_1.
\eel
\citet{Antoniadis10} suggested this jointly convex loss function as a way of extending Huber's robust regression method to high dimensions.
For $a=0$ and $\lam=\hsigma\lam_0$ with fixed $\hsigma$,
$\hsigma L_{\lam_0}(\bbeta,\hsigma) = L_\lam(\bbeta)+\hsigma^2/2$,
so that $\hbbeta\leftarrow \hbbeta(\lam)$ in (\ref{alg}) minimizes
$L_{\lam_0}(\bbeta,\hsigma)$ over $\bbeta$. For fixed $\hbbeta$,
$\hsigma^2 \leftarrow {|}\by-\bX\hbbeta{|}_2^2/\{(1-a)n\}$ in (\ref{gen-alg})
minimizes $L_{\lam_0}(\hbbeta,\sigma)$ over $\sigma$.
During the revision of this paper, we learned that \cite{SheO11} have considered penalizing Huber's concomitant loss function for outlier detection in linear regression.
We summarize some properties of the algorithm (\ref{gen-alg}) with (\ref{alg}) in the following proposition.

\begin{proposition}\label{prop-1} Let $\hbbeta=\hbbeta(\lam)$ be a solution path of (\ref{KKT}) with $\rho(t)=t$.
The penalized loss function (\ref{pen-loss-joint}) is jointly convex in $(\bbeta, \sigma)$ and the algorithm
(\ref{gen-alg}) with (\ref{alg}) converges to
\bel{estimator}
(\hbbeta,\hsigma) = \argmin_{\bbeta,\sigma}L_{\lam_0}(\bbeta,\sigma).
\eel
The resulting estimators $\hbbeta=\hbbeta(\bX,\by)$ and $\hsigma=\hsigma(\bX,\by)$ are scale equivariant in $\by$ in the sense that
$\hbbeta(\bX,c\by) = c\hbbeta(\bX,\by)$ and $\hsigma(\bX,c\by)=|c|\hsigma(\bX,\by)$. Moreover,
\bel{prop-1-1}
\frac{\pa}{\pa\sigma} L_{\lam_0}\big\{\hbbeta(\sigma\lam_0),\sigma\big\}
= \frac{1-a}{2} - \frac{{|}\by-\bX\hbbeta(\sigma\lam_0){|}_2^2}{2n\sigma^2}.
\eel
\end{proposition}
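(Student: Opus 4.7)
The plan is to verify the four assertions in turn, in each case reducing to a standard construction.

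For joint convexity, I would recognize the data-fit term as a perspective function: with $f(u)={|}u{|}_2^2/(2n)$ convex in $u$, its perspective $(u,\sigma)\mapsto \sigma f(u/\sigma)={|}u{|}_2^2/(2n\sigma)$ is jointly convex on $\real^n\times(0,\infty)$, and precomposition with the affine map $\bbeta\mapsto \by-\bX\bbeta$ preserves joint convexity in $(\bbeta,\sigma)$. The remaining terms $(1-a)\sigma/2$ and $\lam_0{|}\bbeta{|}_1$ are convex, so $L_{\lam_0}$ is jointly convex.

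Convergence of (\ref{gen-alg})--(\ref{alg}) is then a direct application of alternating minimization. The identity $\sigma L_{\lam_0}(\bbeta,\sigma)=L_\lam(\bbeta)+(1-a)\sigma^2/2$ at $\lam=\sigma\lam_0$, essentially displayed in the excerpt, shows that for each fixed $\sigma>0$ the update $\hbbeta(\sigma\lam_0)$ in (\ref{alg}) is a minimizer of $L_{\lam_0}(\cdot,\sigma)$, while the $\sigma$-update in (\ref{gen-alg}) is the unique minimizer of the strictly convex map $\sigma\mapsto L_{\lam_0}(\hbbeta,\sigma)$ on $(0,\infty)$, obtained by zeroing the first-order condition. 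Since $L_{\lam_0}$ is jointly convex and coercive, the objective values descend monotonically to the joint minimum.

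For scale equivariance, a direct substitution shows that replacing $\by$ by $c\by$ and $(\bbeta,\sigma)$ by $(c\bbeta,|c|\sigma)$ multiplies $L_{\lam_0}$ by $|c|$, so the joint argmin transforms correspondingly under $(\bbeta,\sigma)\mapsto(c\bbeta,|c|\sigma)$. For (\ref{prop-1-1}), since $\hbbeta(\sigma\lam_0)$ is an inner minimizer of $L_{\lam_0}(\cdot,\sigma)$ by the previous step, the envelope theorem reduces the total derivative in $\sigma$ of $L_{\lam_0}\{\hbbeta(\sigma\lam_0),\sigma\}$ to the explicit partial $\pa L_{\lam_0}/\pa\sigma$ evaluated at $\bbeta=\hbbeta(\sigma\lam_0)$, which gives the claimed formula.

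The step most in need of care is the convergence statement when $p>n$: the $\bbeta$-subproblem is not strictly convex, so $\hbbeta(\lam)$ is only one of possibly many minimizers and the lasso path is only piecewise linear in $\lam$ with knots. However, the prediction $\bX\hbbeta(\lam)$, and hence the residual ${|}\by-\bX\hbbeta(\lam){|}_2^2$, is uniquely determined by $\lam$; this is enough to make the $\sigma$-iteration well defined, to give monotone convergence of $\hsigma$, and to validate the envelope computation irrespective of the choice of path representative.
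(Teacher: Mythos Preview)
Your proposal is correct and close to the paper's own argument; the only notable difference is in how you justify the derivative identity~(\ref{prop-1-1}). The paper proves it by a direct chain-rule computation: for coordinates with $\hbeta_j(\sigma\lam_0)\neq 0$ the first-order optimality kills the $\beta$-partial, while for coordinates with $\hbeta_j(\sigma\lam_0)=0$ the active set is locally constant along the lasso path so $d\hbeta_j/d\sigma=0$, and in either case the cross term vanishes. You instead invoke the envelope theorem, which packages the same cancellation at a higher level and sidesteps the need to argue about local constancy of the active set; your closing remark that $X\hbeta(\lam)$ and hence the residual are unique even when $\hbeta(\lam)$ is not is exactly the regularity needed to make Danskin's theorem apply cleanly here. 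Both routes are valid; yours is a bit more abstract, the paper's a bit more hands-on, and they coincide in spirit.
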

Since (\ref{pen-loss-joint}) is not strictly convex, the joint estimator may not be unique for some data $(X,y)$. However, since (\ref{pen-loss-joint}) is strictly convex in $\sigma$, $\hsigma$ is always unique in (\ref{estimator}) and the uniqueness of $\hbeta$ follows from that of the lasso estimator $\hbeta(\lam)$ at $\lam=\hsigma\lam_0$; $\hbbeta(\lam)$ is unique when the second part of (\ref{KKT}) is strict in the sense of not hitting $\pm \lam$ when $\hbeta_j=0$, which holds almost everywhere in $(X,y)$ for $\lam>0$. See, for example, \cite{Zhang10-mc+}.
%Due to the joint convexity, all possible minimizers $\hsigma$ form an interval when the minimum of $L_{\lam_0}(\bbeta,\sigma)$ is attained. The algorithm (\ref{gen-alg}) with (\ref{alg}) converges to some point in this interval for $\hsigma$ and the corresponding $\hbbeta$ in the solution path.

Let $\hsigma(\lam) = |y-X\hbeta(\lam)|_2/\{(1-a)n\}^{1/2}$. For $\lam_0=\{(2/n)\log p\}^{1/2}$,
(\ref{prop-1-1}) implies that
\bel{zhang-10-format}
\hsigma =\hsigma(\hlam),\  \hlam = \min\big\{\lam: \hsigma^2(\lam) \le n\lam^2/(2\log p)\big\}.
\eel
While the present paper continues our earlier work \citep{SunZ10} by providing further theoretical
and numerical justifications for (\ref{gen-alg}) and (\ref{alg}), the estimator has appeared in different forms.
In addition to (\ref{pen-loss-joint}) and (\ref{estimator}) of \cite{Antoniadis10}, (\ref{zhang-10-format}) appeared in
\citet{Zhang10-mc+}.
While this paper was in revision, a reviewer called our attention to \cite{BelloniCW11}, who focused on
studying $\hbeta$ in an equivalent form as square-root lasso.
We note that (\ref{gen-alg}) and (\ref{alg}) allow concave penalties and degrees of freedom
adjustments as in \citet{Zhang10-mc+}.

\section{Theoretical results}
\subsection{Analysis of scaled lasso}
%In this section, we study theoretical properties of the scaled lasso estimator (\ref{estimator}).
Let $\bbeta^*$ be a vector of true regression coefficients.
%An oracle expert with the knowledge of $\bbeta^*$ would estimate the noise level by $\sigma^*={|}\by-\bX\bbeta^*{|}_2/n^{1/2}$.
An expert with oracular knowledge of $\bbeta^*$ would estimate the noise level by the oracle estimator
\bel{sigma^*}
\sigma^*={|}\by-\bX\bbeta^*{|}_2/n^{1/2}.
\eel
Under the Gaussian assumption, this is the maximum likelihood estimator for $\sigma$ when $\beta^*$ is known and $n(\sigma^*/\sigma)^2$ follows the $\chi^2_n$ distribution.
Due to the scale equivariance of $\hsigma$ in Proposition 1, it is natural to use $\sigma^*$ as an estimation target with or without the Gaussian assumption.
We derive upper and lower bounds for $\hsigma/\sigma^*-1$ and use them to prove the consistency and asymptotic normality of $\hsigma$. We derive oracle inequalities for the prediction performance and the estimation of $\bbeta$ under the $\ell_q$ loss. Throughout the sequel, $\mathrm{pr}_{\bbeta,\sigma}$ is the probability measure under which $\by-\bX\bbeta\sim N(0, \sigma^2\bI_n)$. We assume that ${|}\bx_j{|}_2^2=n$ whenever $\mathrm{pr}_{\bbeta,\sigma}$ is invoked. The asymptotic theory here concerns $n\to\infty$ and
allows all parameters and variables to depend on $n$, including $p\ge n\ge |\beta|_0\to \infty$.

We first provide the consistency for the estimation of $\sigma$ via an oracle inequality for the
prediction error of the scaled lasso.  In our first theorem, the relative error for the estimation
of $\sigma$ is bounded by a quantity $\tau_0$ related to a prediction error bound
$\eta(\lam,\xi,w,T)$ in (\ref{eta}) below.
For $\lam>0$, $\xi>1$, $\bw\in\real^p$, and $T\subset\{1,\dots,p\}$, define $\delta_{w,T}= 1 - I(w=\beta^*,T=\emptyset)$ and
\bel{eta}
%\eta(\lam_0,\xi,\bw,T) = \frac{{|}\bX\bbeta^*-\bX\bw{|}_2^2}{(\sigma^*)^2n}
%+ \frac{2 \lam_0{|}\bw_{T^c}{|}_1}{\sigma^*}
%+ \frac{4\xi^2\lam_0^2|T|}{(1+\xi)^2\kappa^2(\xi,T)},
\eta(\lam,\xi,\bw,T) = {|}\bX\bbeta^*-\bX\bw{|}_2^2/n
+(1+\delta_{w,T})2\lam{|}\bw_{T^c}{|}_1+ \frac{4\xi^2\lam^2|T|}{(\xi+1)^2\kappa^2(\xi,T)}
\eel
where $\kappa(\xi,T)$, the compatibility factor \citep{vandeGeerB09}, is defined as
\bel{compatible}
\kappa(\xi,T) = \min\Big\{\frac{|T|^{1/2}{|}\bX\bu{|}_2}{n^{1/2}{|}\bu_T{|}_1}:
\bu\in \scrC(\xi,T),\ \bu\neq 0\Big\}
\eel
with the cone $\scrC(\xi,T)=\{\bu: {|}\bu_{T^c}{|}_1\le\xi{|}\bu_T{|}_1\}$.
Since the prediction error bound $\eta(\lam,\xi,w,T)$ is valid for all $w$ and $T$,
$\tau_0$ is related to its minimum over all $w$ and $T$ at the oracle scale $\sigma^*$:
\bel{min-eta}
\tau_0=\eta_*^{1/2}(\sigma^*\lam_0,\xi)/\sigma^*,\quad
\eta_*(\lam,\xi)=\inf_{\bw,T}\ \eta(\lam,\xi,\bw,T).
\eel

\begin{theorem}\label{th-1}
Let $(\hbbeta,\hsigma)$ be the scaled lasso estimator in (\ref{estimator}),
$\bbeta^*\in \real^p$, $\sigma^*$ the oracle noise level in (\ref{sigma^*}),
$z^*={|}\bX'(\by-\bX\bbeta^*)/n{|}_\infty/\sigma^*$ and $\xi>1$.
When $z^* \le  (1-\tau_0)\lam_0(\xi-1)/(\xi+1)$,
\bel{th-1-1}
%\frac{|\hsigma-\sigma^*|}{\hsigma\vee \sigma^*} \le \tau_0,\quad
\max\Big(1-\frac{\hsigma}{\sigma^*}, 1-\frac{\sigma^*}{\hsigma}\Big) \le \tau_0,\quad
\frac{{|}\bX\hbbeta-\bX\bbeta^*{|}_2}{n^{1/2}\sigma^*}
\le \frac{1}{\sigma^*}\eta_*^{1/2}\Big(\frac{\sigma^*\lam_0}{1-\tau_0},\xi\Big)
\le \frac{\tau_0}{1-\tau_0}.
\eel
%\bel{th-1-1}
%- \tau_0\le \frac{\hsigma}{\sigma^*} - 1
%\le \frac{{|}\bX\hbbeta-\bX\bbeta^*{|}_2}{n^{1/2}\sigma^*}
%\le \frac{1}{\sigma^*}\eta_*^{1/2}\Big(\frac{\sigma^*\lam_0}{1-\tau_0},\xi\Big)
%\le \frac{\tau_0}{1-\tau_0}.
%\eel
In particular, if $\lam_0= A\{(2/n)\log p\}^{1/2}$ with $A>(\xi+1)/(\xi-1)$ and $\eta_*(\sigma\lam_0,\xi)/\sigma\to 0$, then
\bel{th-1-2}
\mathrm{pr}_{\bbeta^*,\sigma}\big(|\hsigma/\sigma-1|>\eps\big)\to 0
\eel
for all $\eps>0$.
\end{theorem}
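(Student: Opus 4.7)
My plan is to first establish the deterministic bounds of (\ref{th-1-1}) on the event $\{z^* \le (1-\tau_0)\lam_0(\xi-1)/(\xi+1)\}$, and then derive the asymptotic statement (\ref{th-1-2}) from standard Gaussian tail bounds. The backbone of the deterministic part combines two ingredients: (i) a plain-lasso sparsity oracle inequality that, whenever the dual condition $|\bX'(\by-\bX\bbeta^*)/n|_\infty \le \lam(\xi-1)/(\xi+1)$ holds, delivers ${|}\bX\hbbeta(\lam)-\bX\bbeta^*{|}_2^2/n \le \eta(\lam,\xi,\bw,T)$ for every $(\bw,T)$, and hence $\le \eta_*(\lam,\xi)$; and (ii) the first-order identity (\ref{prop-1-1}) with $a=0$, which rewrites as $g(\hsigma)=\hsigma^2$ for $g(\sigma):={|}\by-\bX\hbbeta(\sigma\lam_0){|}_2^2/n$. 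Ingredient (i) is a standard compatibility-factor argument starting from $L_\lam(\hbbeta(\lam))\le L_\lam(\bw)$, bounding the noise cross-term via the dual condition, and converting the resulting $\ell_1$-bound on the $T$-block into an $\ell_2$-prediction bound through $\kappa(\xi,T)$. Ingredient (ii) plus joint convexity of $L_{\lam_0}$ forces the envelope $\phi(\sigma):=L_{\lam_0}(\hbbeta(\sigma\lam_0),\sigma)$ to be convex in $\sigma$, so $\phi'(\sigma)=1/2-g(\sigma)/(2\sigma^2)$ is non-decreasing; this yields the one-sided implications $g(\sigma)\ge\sigma^2\Rightarrow\hsigma\ge\sigma$ and $g(\sigma)\le\sigma^2\Rightarrow\hsigma\le\sigma$.

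The sandwich step then evaluates at the two test values $\sigma_-=(1-\tau_0)\sigma^*$ and $\sigma_+=\sigma^*/(1-\tau_0)$. Both exceed $(1-\tau_0)\sigma^*$, so the hypothesis on $z^*$ gives $z^*\sigma^*\le \sigma\lam_0(\xi-1)/(\xi+1)$ at each, licensing ingredient (i) with $\lam=\sigma\lam_0$. The triangle inequality yields $|\sqrt{g(\sigma)}-\sigma^*|\le {|}\bX\hbbeta(\sigma\lam_0)-\bX\bbeta^*{|}_2/n^{1/2}\le \eta_*^{1/2}(\sigma\lam_0,\xi)$. A straightforward term-by-term comparison shows $\eta(c\lam,\xi,\bw,T)\le c^2\eta(\lam,\xi,\bw,T)$ for $c\ge 1$, so $\eta_*$ is non-decreasing in $\lam$ and $\eta_*^{1/2}(c\lam,\xi)\le c\,\eta_*^{1/2}(\lam,\xi)$. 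At $\sigma_-$ monotonicity gives $\eta_*^{1/2}(\sigma_-\lam_0,\xi)\le \tau_0\sigma^*$, whence $\sqrt{g(\sigma_-)}\ge\sigma_-$ and $\hsigma\ge\sigma_-$; at $\sigma_+$ the $c^2$-scaling gives $\eta_*^{1/2}(\sigma_+\lam_0,\xi)\le \tau_0\sigma^*/(1-\tau_0)$, whence $\sqrt{g(\sigma_+)}\le\sigma_+$ and $\hsigma\le\sigma_+$. The prediction bound then follows by applying ingredient (i) at $\lam=\hsigma\lam_0\le \sigma^*\lam_0/(1-\tau_0)$ and invoking the same scaling.

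For (\ref{th-1-2}), I would verify that the hypothesis event has probability tending to one. Under $\mathrm{pr}_{\bbeta^*,\sigma}$ we have ${|}\bX'(\by-\bX\bbeta^*)/n{|}_\infty\le \sigma\{(2/n)\log p\}^{1/2}\{1+o_P(1)\}$ by a union bound on $p$ Gaussian tails using ${|}\bx_j{|}_2^2=n$, while $\sigma^*/\sigma\to 1$ in probability since $n(\sigma^*/\sigma)^2\sim\chi^2_n$. Hence $z^*\le \{(2/n)\log p\}^{1/2}\{1+o_P(1)\}$, and $A(\xi-1)/(\xi+1)>1$ together with $\tau_0\to 0$ (which is what $\eta_*(\sigma\lam_0,\xi)/\sigma\to 0$ supplies, after replacing $\sigma$ by $\sigma^*$) makes the hypothesis hold eventually. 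Applying (\ref{th-1-1}) then gives $|\hsigma/\sigma^*-1|\le \tau_0/(1-\tau_0)\to 0$, and combining with $\sigma^*/\sigma\to 1$ completes the proof.

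The main obstacle is the self-referential nature of the scaled lasso: the penalty level $\hsigma\lam_0$ used in the lasso depends on the very residual norm we want to control. The key realization that dissolves the circularity is that convexity of the joint loss forces monotonicity of $\phi'$, and $\eta_*$ is monotone in $\lam$ with a clean quadratic scaling; these together let us certify $\hsigma$ lies between $\sigma_-$ and $\sigma_+$ by probing $g$ at those two deterministic test points rather than at the unknown $\hsigma$ itself.
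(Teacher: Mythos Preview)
Your proposal is correct and follows essentially the same route as the paper: invoke the fixed-$\lam$ prediction oracle inequality (the paper's Theorem~\ref{th-3}) together with the convexity of the profile loss via (\ref{prop-1-1}), then test at the two points $\sigma_-=(1-\tau_0)\sigma^*$ and $\sigma_+=\sigma^*/(1-\tau_0)$ using the monotonicity and $c^2$-scaling of $\eta_*$, exactly as the paper does. The only minor caveat is that to obtain the precise constant in $\eta(\lam,\xi,w,T)$ for general $w$, the paper's derivation of ingredient (i) starts from the Karush--Kuhn--Tucker inner-product identity (yielding the extra $|X\hbbeta-Xw|_2^2/n$ term in (\ref{basic})) rather than from the variational inequality $L_\lam(\hbbeta)\le L_\lam(w)$ alone; this does not affect your proof of Theorem~\ref{th-1} once the prediction bound is taken as given.
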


\medskip
Theorem \ref{th-1} extends to the scaled lasso a unification of prediction oracle
inequalities for a fixed penalty. With $\lam=\sigma^*\lam_0/(1-\tau_0)_+$, (\ref{th-1-1}) gives
$\max\{(\sigma^*\tau_0)^2,{|}\bX\hbbeta-\bX\bbeta^*{|}_2^2/n\}\le \eta_*(\lam,\xi)$, or
\bel{th-1-3}
\max\{(\sigma^*\tau_0)^2,{|}\bX\hbbeta-\bX\bbeta^*{|}_2^2/n\}
\le\min_w \Big\{{|}\bX w-\bX\bbeta^*{|}_2^2/n + 4\Ctil\lam\sum_{j=1}^p\min(\lam,|w_j|)\Big\}
\eel
for a $\Ctil\ge 1$,
if the minimum in (\ref{th-1-3}) is attained at a $\wtil$ with $(1+1/\xi)^2\kappa^2(\xi,\Ttil)\ge 1/\Ctil$,
where $\Ttil=\{j: |\wtil_j|>\lam\}$. This asserts that for an arbitrary, possibly non-sparse
$\beta^*$, the prediction error of the scaled lasso is no greater
than that of the best linear predictor $Xw$ with a sparse $w$ for an additional capped-$\ell_1$
cost of the order $\lam\sum_j \min(\lam,|w_j|)$.
A consequence of this prediction error bound for the scaled lasso is the consistency of the corresponding estimator of the noise level in (\ref{th-1-2}). Due to the scale equivariance in Proposition \ref{prop-1}, Theorem \ref{th-1} and the results in the rest of the section are all scale free.

For fixed penalty $\lam$, the upper bound $\eta(\lam,\xi,w,T)$
has been previously established for different $w$ and $T$,
with possibly different constant factors. Examples include
$\eta(\lam,\xi,\beta^*,\emptyset)=2\lam|\beta^*|_1$ \citep{GreenshteinR04, Greenshtein06},
$\eta(\lam,\xi,\beta^*,S_{\beta^*})\lesssim \lam^2|\beta^*|_0$ with
$S_w=\{j: w_j\neq 0\}$ \citep{vandeGeerB09}, and $\min_w\eta(\lam,\xi,w,S_w)
= \min_w\{|X\beta^*-Xw|_2^2/n+O(\lam^2|w|_0)\}$ \citep{KoltchinskiiLT10}.
 %For fixed penalty, \citep{vandeGeerB09} considered $\bw=\bbeta^*$ and $\bbeta^*_{T^c}=0$
%in (\ref{eta}) and \citet{KoltchinskiiLT10} considered either $\bw_{T^c}=0$ or $T=\emptyset$,
%whereas (\ref{min-eta}) takes the minimum over all $(\bw,T)$.
%This will be further discussed in Section~5.
In (\ref{eta}), the coefficient for $|Xw-X\beta^*|_2^2/n$ is 1 as in \cite{KoltchinskiiLT10}.

Now we provide sharper convergence rates and the asymptotic normality for the scaled lasso
estimation of the noise level $\sigma$. This sharper rate $\lam\mu(\lam,\xi)/\sigma^2$,
essentially taking the square of the order $\tau_0$ in (\ref{th-1-1}),
is based on the following $\ell_1$ error bound for the estimation of $\bbeta$,
\bel{mu}
\mu(\lam,\xi) = (\xi+1)\min_{T}\inf_{0<\nu<1}
\max\Big[\frac{{|}\bbeta^*_{T^c}{|}_1}{\nu} ,\frac{\lam|T|/\{2(1-\nu)\}}
{\kappa^2\{(\xi+\nu )/(1-\nu ),T\}}\Big].
\eel
This $\ell_1$ error bound has the interpretation
\bel{ell_1-rate}
|\hbeta - \beta^*|_1\le \mu(\lam,\xi) \le \Ctil\sum_{j=1}^p\min(\lam,|\beta^*_j|),
\eel
if $\Ctil\ge (1+\xi)\max\{2,1/\kappa^2(2\xi+1,\Ttil)\}$ with
$\Ttil=\{j: |\beta_j^*|>\lam\}$.
This allows $\beta^*$ to have many small elements, as in
\citet{ZhangH08}, \citet{Zhang09-l1} and \cite{YeZ10}.
The bound $\mu(\lam,\xi)\le (\xi+1)\lam|S_{\beta^*}|/\{2\kappa^2(\xi,S_{\beta^*})\}$ improves
upon its earlier version in \citet{vandeGeerB09} by a constant factor $4\xi/(\xi+1)\in (2,4)$.

\begin{theorem}\label{th-2} Let $\hbbeta,\hsigma, \bbeta^*,\sigma^*,z^*$ and $\xi$ be as in
Theorem \ref{th-1}. Set $\tau_*=\{\lam_0\mu(\sigma^*\lam_0,\xi)/\sigma^*\}^{1/2}$.
(i) The following
inequalities hold when $z^* \le  (1-\tau_*^2)\lam_0(\xi-1)/(\xi+1)$,
\bel{th-2-1}
\max\big(1- \hsigma/\sigma^*,1-\sigma^*/\hsigma)\le \tau_*^2,\quad
{|}\hbbeta-\bbeta^*{|}_1 \le \mu(\sigma^*\lam_0,\xi)/(1-\tau_*^2).
\eel
(ii) Let $\lam_0\ge \{(2/n)\log(p/\eps)\}^{1/2}(\xi+1)/\{(\xi-1)(1-\tau_*^2)\}$.
For all $\eps>0$ and $n-2 > \log(p/\eps) \to\infty$,
\bes
\mathrm{pr}_{\bbeta^*,\sigma}\big\{z^* \le  (1-\tau_*^2)\lam_0(\xi-1)/(\xi+1)\big\}
\ge 1-\{1+o(1)\}\eps/\{\pi\log(p/\eps)\}^{1/2}.
\ees
If $\lam_0= A\{(2/n)\log p\}^{1/2}$ with $A>(\xi+1)/(\xi-1)$
and $\lam_0\mu(\sigma\lam_0,\xi)/\sigma \ll n^{-1/2}$, then
\bel{th-2-2}
n^{1/2}\big(\hsigma/\sigma-1\big)\to N(0,1/2)
\eel
in distribution under $\mathrm{pr}_{\bbeta^*,\sigma}$.
\end{theorem}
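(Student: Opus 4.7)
The overarching strategy for part (i) is to refine Theorem~\ref{th-1} by trading the prediction-error bound for the sharper $\ell_1$-error bound $\mu(\lam,\xi)$ and then propagating it through the identity
\bes
\hsigma^2 - (\sigma^*)^2 = {|}\bX\hbbeta-\bX\bbeta^*{|}_2^2/n - 2(\by-\bX\bbeta^*)'\bX(\hbbeta-\bbeta^*)/n.
\ees
I would first prove, for the fixed-penalty lasso $\hbbeta(\lam)$ and any partition $(T,T^c)$ with slack $\nu\in(0,1)$, the inequality ${|}\hbbeta(\lam)-\bbeta^*{|}_1 \le \mu(\lam,\xi)$ under the noise condition $z^*\sigma^* \le \lam(\xi-1)/(\xi+1)$. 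This is a standard consequence of (\ref{KKT}): the noise condition forces the error into the cone $\scrC((\xi+\nu)/(1-\nu),T)$ modulo an off-$T$ correction of size ${|}\bbeta^*_{T^c}{|}_1/\nu$, and the compatibility constant (\ref{compatible}) controls the on-$T$ contribution by $\lam|T|/\{2(1-\nu)\kappa^2\}$; optimizing in $(T,\nu)$ yields (\ref{mu}).

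For the scaled lasso the penalty $\hlam=\hsigma\lam_0$ is random, so I would use a self-consistency step. Bounding the cross term above by $|(\by-\bX\bbeta^*)'\bX(\hbbeta-\bbeta^*)/n| \le z^*\sigma^*{|}\hbbeta-\bbeta^*{|}_1$ and the quadratic term by the usual prediction estimate of order $\hlam{|}\hbbeta-\bbeta^*{|}_1$ derived from (\ref{KKT}) gives $|(\hsigma/\sigma^*)^2 - 1| \le C(\lam_0/\sigma^*){|}\hbbeta-\bbeta^*{|}_1$ provided $\hsigma$ is of the same order as $\sigma^*$. The hypothesis $z^* \le (1-\tau_*^2)\lam_0(\xi-1)/(\xi+1)$ verifies the noise condition at $\lam=\hlam$ once $\hsigma/\sigma^*\ge 1-\tau_*^2$, so the $\ell_1$ bound applies with $\lam=\hlam$ and, by monotonicity of $\mu(\cdot,\xi)$ in $\lam$, with $\lam=\sigma^*\lam_0/(1-\tau_*^2)$. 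Substituting delivers $|\hsigma/\sigma^*-1|\le \tau_*^2$ and simultaneously ${|}\hbbeta-\bbeta^*{|}_1 \le \mu(\sigma^*\lam_0,\xi)/(1-\tau_*^2)$, closing the loop.

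For part (ii), the probability bound on $z^*$ is a Gaussian maximal inequality: under $\mathrm{pr}_{\bbeta^*,\sigma}$, each ratio $\bx_j'(\by-\bX\bbeta^*)/(n^{1/2}{|}\by-\bX\bbeta^*{|}_2)$ is distributed as one coordinate of a uniform unit vector in $\real^n$, whose tail is no heavier than that of a standard normal $Z$; the Mills-type estimate $\mathrm{pr}(|Z|>t)\le 2\phi(t)/t$, a union bound over $j$, and the calibration $t^2/2 \approx \log(p/\eps)$ yield the stated asymptotic probability $\eps/\{\pi\log(p/\eps)\}^{1/2}$. The central limit theorem then follows from the decomposition
\bes
n^{1/2}(\hsigma/\sigma - 1) = (\sigma^*/\sigma)\,n^{1/2}(\hsigma/\sigma^* - 1) + n^{1/2}(\sigma^*/\sigma - 1);
\ees
the assumption $\lam_0\mu(\sigma\lam_0,\xi)/\sigma \ll n^{-1/2}$ forces $n^{1/2}\tau_*^2 \to 0$, so part (i) makes the first summand $o_p(1)$, while $n(\sigma^*/\sigma)^2\sim \chi^2_n$ and the delta method deliver the $N(0,1/2)$ limit.

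The main obstacle is the self-referential character of part (i): the lasso $\ell_1$ bound at penalty $\hlam$ must be shown to control $|\hsigma/\sigma^*-1|$, while it is precisely the control of $\hsigma/\sigma^*$ that legitimizes the penalty level at which the bound is invoked. Careful tracking of constants is essential so that the quadratic term ${|}\bX(\hbbeta-\bbeta^*){|}_2^2/n$, itself only of order $\lam_0\sigma^*{|}\hbbeta-\bbeta^*{|}_1$, does not introduce an error of the larger order $\tau_0^2$ and thereby erase the improvement from $\tau_0$ to $\tau_*^2$ that is the whole point of the theorem.
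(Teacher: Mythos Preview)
Your proposal correctly identifies the key identity $\hsigma^2-(\sigma^*)^2 = {|}X(\hbbeta-\bbeta^*){|}_2^2/n - 2(y-X\bbeta^*)'X(\hbbeta-\bbeta^*)/n$ and the $\ell_1$ bound $\mu(\lam,\xi)$ as the engine, and your treatment of part (ii) is on the right track. The paper is more careful there: it observes that $\sqrt{n-1}\,z_j/\sqrt{1-z_j^2}$ has an exact $t_{n-1}$ distribution and proves a dedicated tail bound (Lemma~\ref{t-dist}) to obtain the precise constant $\{1+o(1)\}/\sqrt{\pi\log(p/\eps)}$; your assertion that the coordinate of a uniform unit vector has tails ``no heavier than standard normal'' is not sharp enough under the weak hypothesis $n-2>\log(p/\eps)$ and needs this extra step.

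The real gap is in part (i), and it is structural, not a matter of constants. You recognize the circularity---to apply the $\ell_1$ bound at penalty $\hlam=\hsigma\lam_0$ you already need $\hsigma/\sigma^*\ge 1-\tau_*^2$, which is exactly what you are proving---but you do not break it; bootstrapping from Theorem~\ref{th-1} does not help since $\tau_0\asymp\tau_*\gg\tau_*^2$. The paper's device is Proposition~\ref{prop-1}: the profile loss $t\mapsto L_{\lam_0}(\hbbeta(t\lam_0),t)$ is convex in $t$ with derivative $\tfrac12\{1-{|}y-X\hbbeta(t\lam_0){|}_2^2/(nt^2)\}$. One therefore evaluates your two-sided bound for $(\sigma^*)^2-{|}y-X\hbbeta(\lam){|}_2^2/n$ not at the random $\hsigma$ but at \emph{deterministic} endpoints $t=\sigma^*(1-\phi_-)$ and $t=\sigma^*(1+\phi_+)$, where $\phi_\pm$ are explicit functions of $\tau_*^2$ satisfying $1-\phi_-\ge 1-\tau_*^2$ and $1+\phi_+\le 1/(1-\tau_*^2)$. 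At these $t$ the noise condition $z^*\sigma^*\le t\lam_0(\xi-1)/(\xi+1)$ holds directly from the hypothesis, so Theorem~\ref{th-3} gives ${|}\hbbeta(t\lam_0)-\bbeta^*{|}_1\le\mu(t\lam_0,\xi)$ unconditionally; substituting shows the derivative is $\le 0$ at the lower endpoint and $\ge 0$ at the upper, and convexity traps $\hsigma$ in between. This sign-of-derivative argument at fixed $t$ is what replaces your self-consistency loop.
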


Since $\sigma^2 \tau_*^2\approx \mu(\lam,\xi) \le 2(\xi+1)\min_T \eta(\lam,2\xi+1,\bbeta^*,T)$ with
$\lam = \sigma\lam_0$, the rate $\tau_*^2$ in (\ref{th-2-1}) is essentially the square of that in (\ref{th-1-1}),
in view of (\ref{min-eta}). It follows that the scaled lasso provides a faster convergence rate
than does the penalized maximum likelihood estimator for the estimation of the noise level
\citep{StadlerBG10,SunZ10}. In particular, (\ref{th-2-1}) implies that
\bel{th-2-3}
\max\big(1- \hsigma/\sigma^*,1-\sigma^*/\hsigma)
\le (\xi+1)\lam_0^2|S_{\beta^*}|/\{2\kappa^2(\xi,S_{\beta^*})\}
\lesssim |\beta^*|_0(\log p)/n
\eel
with $S_{\beta^*}=\{j:\beta^*_j\neq 0\}$,
when $\kappa^2(\xi,S_{\beta^*})$ can be treated as a constant.
The bounds in (\ref{th-2-3}) and its general version (\ref{th-2-1}) lead to the asymptotic normality (\ref{th-2-2}) under proper
assumptions. Thus, statistical inference
about $\sigma$ is justified with the scaled lasso in certain large-$p$-smaller-$n$ cases, for example,
when $|\beta^*|_0(\log p)/\surd n\to 0$ under the compatibility condition \citep{vandeGeerB09}.

For a fixed penalty level, oracle inequalities for the $\ell_q$ error of the lasso have been established in
\citet{BuneaTW07}, \citet{vandeGeer08} and \citet{vandeGeerB09} for $q=1$,
\citet{ZhangH08} and \citet{BickelRT09} for $q\in [1,2]$,
\citet{MeinshausenY09} for $q=2$, and \citet{Zhang09-l1} and \citet{YeZ10} for $q\ge 1$.
The bounds on $\hsigma/\sigma^*$ in (\ref{th-2-1}) and (\ref{th-2-3}) allow automatic extensions of these existing $\ell_q$ oracle inequalities from the lasso with fixed penalty to the scaled lasso. We illustrate this by extending the oracle inequalities of \citet{YeZ10} for the lasso and \citet{CandesT07} for the Dantzig selector in the following corollary.
%Let
%%%add
\citet{YeZ10} used the following sign-restricted cone invertibility factor to separate
conditions on the error $y-X\beta^*$ and design $X$ in the derivation of error bounds for the lasso:
\bel{scif}
F_q(\xi,S)
=\inf\Big\{\frac{|S|^{1/q}{|}\bX'\bX\bu{|}_\infty}{n{|}\bu{|}_q}:\bu\in\scrC_-(\xi,S)\Big\},
\eel
where $\scrC_-(\xi,S)=\{\bu: {|}\bu_{S^c}{|}_1\le\xi{|}\bu_S{|}_1\neq 0, u_j\bx_j'\bX\bu\le0, \text{for all } j\not\in S\}$.
The quantity (\ref{scif}) can be viewed as a generalized restricted eigenvalue comparing the $\ell_q$
loss and the dual norm of the $\ell_1$ penalty with respect to the inner product for the least squares fit.
This gives a direct connection to the Karush--Kuhn--Tucker condition (\ref{KKT}).
Compared with the restricted eigenvalue \citep{BickelRT09} and
the compatibility factor (\ref{compatible}), a main advantage of (\ref{scif}) is to allow all $q\in [1,\infty]$.
In addition, (\ref{scif}) yields sharper oracle inequalities \citep{YeZ10}.
For $(|A|,|B|,{|}\bu{|}_2)=(\lceil a\rceil,\lceil b\rceil,1)$ with $A\cap B=\emptyset$, define
\bel{SRC}
\delta^\pm_a = \max_{A,\bu}\Big\{\pm\Big({|}\bX_A\bu/n^{1/2}{|}_2-1\Big)\Big\},\
\theta_{a,b} = \max_{A,B,\bu}\big{|}\bX_A'\bX_B\bu/n\big{|}_2.
\eel
The quantities in (\ref{SRC}) are used in the uniform uncertainty principle \citep{CandesT07} and
the sparse Riesz condition \citep{ZhangH08}. We note that $1-\delta_a^-$ is the minimum eigenvalue
of $X_A'X_A/n$ among $|A|\le a$, $1+\delta_a^+$ is the corresponding maximum eigenvalue,
and $\theta_{a,b}$ is the maximum operator norm of size $a\times b$
off-diagonal sub-blocks of the Gram matrix $X'X/n$.

\begin{corollary}\label{cor1} Suppose ${|}\bbeta^*_{S^c}{|}_1=0$. Then, Theorem \ref{th-2} holds with $\mu(\lam,\xi)$ replaced by $\lam|S|(2\xi)/\{(\xi+1)F_1(\xi,S)\}$, and for $z^* \le  (1-\tau_*^2)\lam_0(\xi-1)/(\xi+1)$,
\bel{equ-thm1-3}&&{|}\hbbeta-\bbeta^*{|}_q
\le \frac{k^{1/q}(\sigma^* z^*+\hsigma\lam_0)}{F_q(\xi,S)}
\le \frac{2\sigma^*\xi\lam_0k^{1/q}}{(1-\tau_*^2)(\xi+1)F_q(\xi,S)}
\eel
for all $1\le q\le \infty$, where $k=|S|$.
In particular, for $\xi=\surd 2$ and $z^* \le  (1-\tau_*^2)\lam_0(\surd{2}-1)^2$,
\bel{eigen}
{|}\hbbeta-\bbeta^*{|}_2
\le\frac{(8k)^{1/2}\lam_0\sigma^*/(1-\tau_*^2)}{(\surd{2}+1)F_2(\surd{2},S)}
\le\frac{4k^{1/2}\lam_0\sigma^*/(1-\tau_*^2)}
{(\surd{2}+1)(1-\delta^-_{1.5k}-\theta_{2k,1.5k})_+}.
\eel
\end{corollary}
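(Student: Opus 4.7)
The plan is to re-execute the argument for Theorem~\ref{th-2} with its $\kappa^2$-based $\ell_1$ bound on $h = \hbbeta - \bbeta^*$ replaced by one routed through the sign-restricted cone invertibility factor $F_1(\xi,S)$, and then to read off the $\ell_q$ statement directly from the definition (\ref{scif}).

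I begin from the lasso KKT identity $\bX'(\by - \bX\hbbeta)/n = \hsigma\lam_0\, g$ with $g \in \partial|\hbbeta|_1$. Writing $h = \hbbeta - \bbeta^*$ and $z = \bX'(\by - \bX\bbeta^*)/n$, this gives the master identity $\bX'\bX h/n = z - \hsigma\lam_0\, g$, whence
$$|\bX'\bX h/n|_\infty \le \sigma^* z^* + \hsigma\lam_0.$$
To invoke $F_q$, I first establish $h \in \scrC_-(\xi,S)$. Since $\bbeta^*_{S^c} = 0$ forces $g_j h_j = |h_j|$ for $j \in S^c$ and $g_j h_j \ge -|h_j|$ for $j \in S$, taking inner products of the master identity with $h$ and using $h'\bX'\bX h \ge 0$ and $|h'z| \le \sigma^* z^* |h|_1$ yields
$$|h_{S^c}|_1(\hsigma\lam_0 - \sigma^* z^*) \le |h_S|_1(\hsigma\lam_0 + \sigma^* z^*).$$
The hypothesis on $z^*$, together with $\hsigma \ge (1-\tau_*^2)\sigma^*$ from Theorem~\ref{th-2}(i) applied with the replaced $\tau_*$, forces $\sigma^* z^*/(\hsigma\lam_0) \le (\xi-1)/(\xi+1)$, which is equivalent to $|h_{S^c}|_1 \le \xi |h_S|_1$. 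For the sign restriction, $h_j(\bX'\bX h/n)_j = h_j z_j - \hsigma\lam_0 |h_j| \le |h_j|(\sigma^* z^* - \hsigma\lam_0) \le 0$ for $j \in S^c$.

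With $h \in \scrC_-(\xi,S)$, the definition (\ref{scif}) directly delivers
$$|h|_q \le \frac{k^{1/q}|\bX'\bX h/n|_\infty}{F_q(\xi,S)} \le \frac{k^{1/q}(\sigma^* z^* + \hsigma\lam_0)}{F_q(\xi,S)},$$
which is the first inequality in (\ref{equ-thm1-3}). The second follows by inserting $\hsigma \le \sigma^*/(1-\tau_*^2)$ and the hypothesis on $z^*$ and using the elementary bound $(1-\tau_*^2)^2(\xi-1) + (\xi+1) \le 2\xi$ valid for $\tau_*^2 \in [0,1]$. Specialising $q=1$ gives $|h|_1 \le \sigma^*\lam_0 |S| (2\xi)/\{(\xi+1)F_1(\xi,S)(1-\tau_*^2)\}$, matching the announced $\mu$-replacement. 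Since the proof of Theorem~\ref{th-2} uses $\mu(\sigma^*\lam_0,\xi)$ only as an upper bound on $|h|_1$, that entire argument, including its Gaussian-tail half, transfers verbatim, closing the implicit definition of $\tau_*$.

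For the $\xi = \sqrt{2}$ case, the first inequality in (\ref{eigen}) follows by substituting $(2\xi)/(\xi+1) = 2\sqrt{2}/(\sqrt{2}+1)$ with $q=2$. The second reduces to the deterministic lower bound $F_2(\sqrt{2},S) \ge (1 - \delta^-_{1.5k} - \theta_{2k,1.5k})_+$ from \citet{YeZ10}, obtained by adjoining the largest $0.5k$ coordinates of $|\bu_{S^c}|$ to $S$ into a set of size $1.5k$, controlling this sparse part through $1 - \delta^-_{1.5k}$ via the sparse Riesz condition, and controlling the cross contribution to the remaining coordinates through $\theta_{2k,1.5k}$, with the sign restriction of $\scrC_-$ ensuring that the off-diagonal terms enter with the favourable sign. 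The main friction is the self-referential appearance of $\tau_*$ in the $\hsigma$ bounds invoked above; this is resolved cleanly because those bounds on $\hsigma/\sigma^*$ are outputs of the proof of Theorem~\ref{th-2}, whose only use of $\mu$ is as an $\ell_1$ bound, so the whole system closes simultaneously.
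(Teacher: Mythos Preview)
Your proof is correct in substance; the paper itself does not supply a proof of Corollary~\ref{cor1}, treating it as an immediate consequence of the fixed-penalty oracle inequalities of \citet{YeZ10} together with the $\hsigma$ bounds from Theorem~\ref{th-2}. Your argument fills in exactly those details.

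The one point worth tightening is the presentational circularity you flag at the end. As written, you invoke $\hsigma\ge(1-\tau_*^2)\sigma^*$ from the replaced Theorem~\ref{th-2}(i) in order to put $h$ in the cone $\scrC_-(\xi,S)$, but the replaced Theorem~\ref{th-2}(i) itself requires the $F_1$-based $\ell_1$ bound as its input. Your closing remark that the system ``closes simultaneously'' is correct, but the argument becomes linear if you reorder: first run your KKT and cone-membership computation at a \emph{fixed} penalty $\lam$ under the hypothesis $\sigma^*z^*\le\lam(\xi-1)/(\xi+1)$, obtaining ${|}\hbbeta(\lam)-\bbeta^*{|}_1\le 2\xi\lam|S|/\{(\xi+1)F_1(\xi,S)\}$ directly; this is the Ye--Zhang replacement for (\ref{th-3-2}) in Theorem~\ref{th-3}. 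The proof of Theorem~\ref{th-2} then transfers verbatim, yielding the $\hsigma$ bounds with the new $\tau_*$. Finally, apply your $F_q$ argument at the scaled penalty $\hlam=\hsigma\lam_0$ to obtain (\ref{equ-thm1-3}). This is logically equivalent to what you wrote but avoids the self-reference. The remaining verification of $(1-\tau_*^2)^2(\xi-1)+(\xi+1)\le 2\xi$ and the appeal to the Ye--Zhang bound $F_2(\surd 2,S)\ge(1-\delta^-_{1.5k}-\theta_{2k,1.5k})_+/\surd 2$ for (\ref{eigen}) are both sound.
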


The proofs of Theorems 1 and 2 are based on a basic inequality
\bel{basic}
&& \hspace{-.8in} |X\hbeta(\lam)-X\beta^*|_2^2/n + |X\hbeta(\lam)-X w|_2^2/n
\cr&& \le |X w - X\beta^*|_2^2/n + 2\lam\{{|}\bw{|}_1-{|}\hbbeta(\lam){|}_1\}
+ 2\sigma^* z^*{|}\bw-\hbbeta(\lam){|}_1 \qquad\quad
\eel
as a consequence of the Karush--Kuhn--Tucker conditions (\ref{KKT}).
The version of (\ref{basic}) with $\bw=\bbeta^*$ is well-known \citep{vandeGeerB09}
and controls $|X\hbeta(\lam)-X\beta^*|_2^2$ for sparse $\beta^*$.
When $|X\hbeta(\lam)-X\beta^*|_2^2 > |Xw-X\beta^*|_2^2$,
(\ref{basic}) controls the excess for sparse $w$ by the same argument.
The general $\bw$ is taken in Theorem 1, while $\bw=\bbeta^*$ is taken in Theorem 2.
In both cases, (\ref{basic}) provides the cone condition in (\ref{compatible}) and (\ref{scif}).
This is used to derive upper and lower bounds for (\ref{prop-1-1}),
the derivative of the profile loss function $L_{\lam_0}(\hbbeta(\sigma\lam_0),\sigma)$
with respect to $\sigma$, within a neighborhood of $\sigma/\sigma^*=1$.
The bounds for the minimizer $\hsigma$ then follow from the joint convexity
of the penalized loss (\ref{pen-loss-joint}).

\subsection{Estimation after model selection}
We have proved that without requiring the knowledge of $\sigma$, the scaled lasso enjoys prediction
and estimation properties comparable to the best known theoretical results for known $\sigma$,
and the scaled lasso estimate of $\sigma$ enjoys consistency and asymptotic normality properties under
proper conditions.
However, the lasso estimator may have substantial bias \citep{FanP04, Zhang10-mc+}, and its bias is
significant in our own simulation experiments.
Although the smoothly clipped absolute deviation and minimax concave penalized selectors
were introduced to remove the bias of the lasso \citep{FanL01,Zhang10-mc+}, a theoretical study of
their scaled version (\ref{gen-alg}) is beyond the scope of this paper. In this subsection, we present theoretical
results for another bias removing method: least squares estimation after model selection.

Given an estimator $\hbeta$ of the coefficient vector $\beta$,
the least squares estimator of $\beta$ and the corresponding estimator of the noise level
$\sigma$ in the model selected by $\hbeta$ are
\bel{mleas}
\hhbeta = \argmin_{\beta}\Big\{|y-X\beta|_2^2:\supp(\beta)\subseteq\supp(\hbeta)\Big\},\quad
\hhsigma = \big|y-X\hhbeta\,\big|_2\big/\surd n,
\eel
where $\supp(\beta)=\{j: \beta_j\neq 0\}$. Alternatively, we may use
$\hhsigma = \big|y-X\hhbeta\,\big|_2\big/\surd(n-|\hbeta|_0)$ to estimate the noise level.
However, since the effect of this degrees of freedom adjustment is of smaller order than
our error bound, we will focus on the simpler (\ref{mleas}).

In addition to the compatibility factor $\kappa(\xi,S)$ in (\ref{compatible}), we use sparse eigenvalues
to study the least squares estimation after the scaled lasso selection.
Let $\lam_{\min}(M)$ be the smallest eigenvalue of a matrix $M$ and $\lam_{max}(M)$ the largest .
For models $T\subset\{1,\ldots,p\}$, define
\bes
\kappa_-(m,T) = \min_{B\supset T, |B\setminus T|\le m}\lam_{\min}(X_B'X_B/n),\
\kappa_+(m,T) = \min_{B\cap T\emptyset, |B|\le m}\lam_{\min}(X_B'X_B/n),
\ees
as the sparse lower eigenvalue of the Gram matrix for models containing $T$
and the sparse upper eigenvalue for models disjoint with $T$.
Let $S = \supp(\beta^*)$ and ${\widehat S}=\supp(\hbeta)$.
The following theorem provides prediction and estimation error bounds for (\ref{mleas}) after the scaled lasso selection,
along with an upper bound for the false positive $|{\widehat S}\setminus S|$, a key element in our study.

\begin{theorem}\label{th-mleas}
%Let $(\hbbeta,\hsigma)$ be the scaled lasso estimator in (\ref{estimator}),
%$\bbeta^*\in \real^p$, $\sigma^*$ the oracle noise level in (\ref{sigma^*}),
%$z^*={|}\bX'(\by-\bX\bbeta^*)/n{|}_\infty/\sigma^*$, $\xi>1$ and
Let  $(\hbbeta,\hsigma)$ be the scaled lasso estimator in (\ref{estimator}) and
$(\hhbeta,\hhsigma)$ the least squares estimator (\ref{mleas}) in model ${\widehat S}$.
Let $\beta^*,\sigma^*,z^*,\xi$ and $\tau_*$ be as in Theorem \ref{th-2} and $m$ be an integer satisfying $|S|\xi^2/\kappa^2(\xi,S)<m/\kappa_+(m,S)$.
If $z^* \le  (1-\tau_*^2)\lam_0(\xi-1)/(\xi+1)$, then
\bel{th-mleas-1}
|{\widehat S}\setminus S| < m,\
\hsigma^2 - \{ \sigma^*_{m-1,S} + \surd \eta_*(\hlam,\xi)\}^2 \le \hhsigma^2 \le\hsigma^2,
\eel
with $\hlam = \hsigma \lam_0 \le \sigma^*\lam_0/(1-\tau_*^2)$
and $\sigma^*_{m,S} =  \max_{B\supseteq S, |B\setminus S| = m}|(y-X\beta^*)_B|_2/\surd n$,
and
\bel{th-mleas-2}
\kappa_-(m-1,S)|\hhbeta - \beta^*|_2^2 \le |X\hhbeta-X\beta^*|_2^2/n
\le \big\{\sigma^*_{m-1,S} + 2 \surd \eta_*(\hlam,\xi)\big\}^2.
\eel
Moreover, in addition to the probability bound for $z^* \le  (1-\tau_*^2)\lam_0(\xi-1)/(\xi+1)$ in
Theorem~\ref{th-2} (ii), for all integers $1\le m\le p$,
\bel{th-mleas-3}
\mathrm{pr}_{\bbeta^*,\sigma}\big[ \sigma^*_{m,S}(\surd n)/\sigma
\ge \surd(m+|S|) + \surd\{(2m)\log(ep/m)+2\log(1/\eps)\} \big] \le \frac{\eps/m}{\surd(2\pi)}.
\eel
\end{theorem}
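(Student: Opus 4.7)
I would establish the three displays of Theorem~\ref{th-mleas} in sequence, all under the event $\{z^* \le (1-\tau_*^2)\lam_0(\xi-1)/(\xi+1)\}$, on which Theorem~\ref{th-2} gives $\hsigma\ge\sigma^*(1-\tau_*^2)$ and the proof of Theorem~\ref{th-1} delivers $|X\hbbeta-X\beta^*|_2^2/n\le \eta_*(\hlam,\xi)$ at the data-driven penalty $\hlam = \hsigma\lam_0$.

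\emph{Step 1 (false positive control).} This is a contradiction argument. Suppose $|\Shat\setminus S|\ge m$ and fix any $T\subseteq \Shat\setminus S$ with $|T|=m$. Since $T\subseteq\Shat$, the Karush--Kuhn--Tucker conditions~(\ref{KKT}) give $|X_T'(y-X\hbbeta)/n|_2 = \hlam\,m^{1/2}$ exactly. Writing $y-X\hbbeta=(y-X\beta^*) - X(\hbbeta-\beta^*)$, I bound the noise piece by $|X_T'(y-X\beta^*)/n|_2\le m^{1/2}\sigma^* z^*$ and the signal piece by Cauchy--Schwarz as $|X_T'X(\hbbeta-\beta^*)/n|_2\le \{\kappa_+(m,S)\}^{1/2}|X\hbbeta - X\beta^*|_2/n^{1/2}$, valid because $T$ is disjoint from $S$ with $|T|=m$. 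Combining and squaring yield $m(\hlam - \sigma^* z^*)^2\le \kappa_+(m,S)\eta_*(\hlam,\xi)$. The hypothesis on $z^*$ together with $\hlam\ge\sigma^*(1-\tau_*^2)\lam_0$ gives $\hlam-\sigma^* z^*\ge 2\hlam/(\xi+1)$, and choosing $(w,T)=(\beta^*,S)$ in~(\ref{eta}) bounds $\eta_*(\hlam,\xi)\le 4\xi^2\hlam^2|S|/\{(\xi+1)^2\kappa^2(\xi,S)\}$. After cancellation, $m/\kappa_+(m,S)\le |S|\xi^2/\kappa^2(\xi,S)$, contradicting the hypothesis.

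\emph{Step 2 (noise and estimation bounds).} That $\hhsigma\le\hsigma$ is immediate from the feasibility of $\hbbeta$ in the least squares problem defining $\hhbeta$. Since $y-X\hhbeta$ is orthogonal to the column space of $X_{\Shat}$ while $X\hhbeta-X\hbbeta$ lies in it, Pythagoras gives $\hsigma^2-\hhsigma^2 = |X\hhbeta-X\hbbeta|_2^2/n$; and the identities $P_{\Shat}y=X\hhbeta$, $P_{\Shat}X\hbbeta=X\hbbeta$ yield $X\hhbeta-X\hbbeta = P_{\Shat}(y-X\beta^*) - P_{\Shat}X(\hbbeta-\beta^*)$. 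With $B=\Shat\cup S$ (so $|B\setminus S|\le m-1$ by Step~1), the bound $|P_{\Shat}(y-X\beta^*)|_2\le|P_B(y-X\beta^*)|_2\le n^{1/2}\sigma^*_{m-1,S}$ combined with $|P_{\Shat}X(\hbbeta-\beta^*)|_2\le |X(\hbbeta-\beta^*)|_2\le\{n\eta_*(\hlam,\xi)\}^{1/2}$ and the triangle inequality deliver (\ref{th-mleas-1}). One more triangle step yields $|X\hhbeta-X\beta^*|_2/n^{1/2}\le \sigma^*_{m-1,S}+2\eta_*^{1/2}(\hlam,\xi)$, and the $\ell_2$ bound in (\ref{th-mleas-2}) follows because $\hhbeta-\beta^*$ is supported in $B$ with $|B\setminus S|\le m-1$, so $|X(\hhbeta-\beta^*)|_2^2\ge n\kappa_-(m-1,S)|\hhbeta-\beta^*|_2^2$.

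\emph{Step 3 (tail bound) and main obstacle.} For each fixed $B\supseteq S$ with $|B\setminus S|=m$, $|P_B(y-X\beta^*)|_2/\sigma$ is $\chi$-distributed with $|S|+m$ degrees of freedom, so Gaussian Lipschitz concentration sharpened via Mills' ratio yields $\mathrm{pr}_{\beta^*,\sigma}\big\{|P_B(y-X\beta^*)|_2/\sigma\ge (|S|+m)^{1/2}+t\big\}\le \phi(t)/t$. A union bound over the $\binom{p-|S|}{m}\le (ep/m)^m$ such sets with $t=\{2m\log(ep/m)+2\log(1/\eps)\}^{1/2}$ then yields (\ref{th-mleas-3}). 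The main obstacle is Step~1: matching the exact constants in the hypothesis $|S|\xi^2/\kappa^2(\xi,S)<m/\kappa_+(m,S)$ requires simultaneously tracking the KKT equality, the sparse upper eigenvalue on a support disjoint from $S$, and the compatibility-factor bound on $\eta_*$; the trick of passing to a size-$m$ subset of $\Shat\setminus S$ before invoking $\kappa_+(m,S)$ avoids any monotonicity question and makes the constants line up.
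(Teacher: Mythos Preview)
Your proposal is correct and follows essentially the same route as the paper. The only cosmetic differences are that in Step~1 the paper argues directly (every $B\subseteq\Shat\setminus S$ with $|B|\le m$ satisfies $|B|<m$) rather than by contradiction, and bounds $|X(\hbeta-\beta^*)|_2^2/n$ via the basic inequality~(\ref{basic}) plus compatibility rather than via $\eta_*(\hlam,\xi)$; since $(\hlam+\sigma^*z^*)^2|S|/\kappa^2(\xi,S)\le 4\xi^2\hlam^2|S|/\{(\xi+1)^2\kappa^2(\xi,S)\}$ under the event in question, the two bounds coincide and the constants match exactly.
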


For Gaussian design matrices, the sparse eigenvalues $\kappa_-(m,\emptyset)$
and $\kappa_+(m,\emptyset)$ can be treated as constants when $m(\log p)/n$ is small
and the eigenvalues of the expected Gram matrix are uniformly bounded away from zero and
infinity \citep{ZhangH08}. Since  $\kappa_-(m,S)\ge \kappa_-(m+|S|,\emptyset)$ and
$\kappa_+(m,S)\le \kappa_+(m,\emptyset)$, they can be treated as constants
in the same sense in Theorem \ref{th-mleas}.
Thus, for sufficiently small $|S|(\log p)/n$, we may take an $m$ of the
same order as $|S|$. In this case, the difference between $(\hhbeta,\hhsigma)$ and the scaled lasso
estimator $(\hbeta,\hsigma)$ is of no greater order than the difference between
$(\hbeta,\hsigma)$ and the estimation target $(\beta^*,\sigma^*)$.
Consequently, %Theorems \ref{th-2} and \ref{th-mleas} yield
\bes
\big|\hhsigma/\sigma -1\big| + \big|\hbbeta - \beta^*\big|_2^2 + \big|X\hhbeta-X\beta^*\big|_2^2/n
= O_P(1)|S|(\log p)/n.
\ees

As we have mentioned earlier, the key element in our analysis of (\ref{mleas}) is the bound
$|{\widehat S}\setminus S| < m$ in (\ref{th-mleas-1}). Since this is a weaker assertion
than variable consistency ${\widehat S}=S$, the conditions of Theorem \ref{th-mleas} on the design
matrix is of a weaker form than the irrepresentability condition for variable selection consistency
\citep{MeinshausenB06, ZhaoY06}. In \cite{ZhangH08} and \cite{Zhang10-mc+}, upper bounds
for the false positive were obtained under a sparse Riesz condition
on $\kappa_-(m,\emptyset)$ and $\kappa_+(m,\emptyset)$.

\begin{table}
\def~{\hphantom{0}}
\captionsetup{width=4.5in}
\tbl{Performance of five methods in Example 1 at penalty levels $\lam_j=\{2^{j-1}(\log p) /n\}^{1/2}$
$(j=1,2,3)$, across 100 replications, in terms of the bias ($\times10$) and standard error ($\times10$)
of $\hsigma/\sigma$ for the selector and $\hhsigma/\sigma$ for the least squares estimator after model selection,
the average model size, and the relative frequency of sure screening,
along with the simulation results in \citet{FanGH10}}{
\begin{tabular}{c c cccc cccc}\\
&&\multicolumn{4}{c}{${r_0}=0$}&\multicolumn{4}{c}{${r_0}=$\,0$\cdot$5}\\
Method & &\multicolumn{1}{c}{$\hsigma/\sigma$} & $\hhsigma/\sigma$ & AMS & SSP
         &\multicolumn{1}{c}{$\hsigma/\sigma$} & $\hhsigma/\sigma$ & AMS & SSP  \\\\
        &$\lam_1$ & 1$\cdot$6$\pm$0$\cdot$6 & $-$1$\cdot$1$\pm$0$\cdot$7 & 7$\cdot$6 & 1$\cdot$0 & 1$\cdot$5$\pm$0$\cdot$6 & $-$1$\cdot$0$\pm$0$\cdot$7 & 9$\cdot$7 & 1$\cdot$0\\
PMLE    &$\lam_2$ & 2$\cdot$5$\pm$0$\cdot$6 & $-$0$\cdot$1$\pm$0$\cdot$6 & 3$\cdot$0 & 1$\cdot$0 & 2$\cdot$5$\pm$0$\cdot$6 & $-$0$\cdot$3$\pm$0$\cdot$6 & 5$\cdot$2 & 1$\cdot$0 \\
        &$\lam_3$ & 3$\cdot$6$\pm$0$\cdot$7 & 1$\cdot$2$\pm$1$\cdot$1 & 1$\cdot$8 & 0$\cdot$2 & 3$\cdot$8$\pm$0$\cdot$6 & $-$0$\cdot$2$\pm$0$\cdot$6 & 3$\cdot$7 & 1$\cdot$0 \\\\
        &$\lam_1$ & 0$\cdot$5$\pm$0$\cdot$6 & $-$1$\cdot$8$\pm$0$\cdot$7 & 11$\cdot$9 & 1$\cdot$0 & 0$\cdot$0$\pm$0$\cdot$6 & $-$1$\cdot$9$\pm$0$\cdot$7 & 15$\cdot$5 & 1$\cdot$0 \\
BC      &$\lam_2$ & 1$\cdot$6$\pm$0$\cdot$6 & $-$0$\cdot$1$\pm$0$\cdot$6 & 3$\cdot$1 & 1$\cdot$0 & 0$\cdot$7$\pm$0$\cdot$6 & $-$0$\cdot$3$\pm$0$\cdot$6 & 6$\cdot$1 & 1$\cdot$0 \\
        &$\lam_3$ & 3$\cdot$3$\pm$0$\cdot$7 & 1$\cdot$1$\pm$1$\cdot$1 & 1$\cdot$9 & 0$\cdot$3 & 1$\cdot$9$\pm$0$\cdot$7 & $-$0$\cdot$2$\pm$0$\cdot$6 & 4$\cdot$2 & 1$\cdot$0\\\\
Scaled  &$\lam_1$  & 0$\cdot$0$\pm$0$\cdot$6 & $-$2$\cdot$1$\pm$0$\cdot$8 & 14$\cdot$6 & 1$\cdot$0 & $-$0$\cdot$5$\pm$0$\cdot$6 & $-$2$\cdot$3$\pm$0$\cdot$7 & 18$\cdot$6 & 1$\cdot$0 \\
lasso   &$\lam_2$ & 1$\cdot$3$\pm$0$\cdot$7 & $-$0$\cdot$2$\pm$0$\cdot$6 & 3$\cdot$1 & 1$\cdot$0 & 0$\cdot$4$\pm$0$\cdot$6 & $-$0$\cdot$3$\pm$0$\cdot$6 & 6$\cdot$2 & 1$\cdot$0 \\
        &$\lam_3$ & 3$\cdot$1$\pm$0$\cdot$7 & 1$\cdot$0$\pm$1$\cdot$1 & 1$\cdot$9 & 0$\cdot$3 & 1$\cdot$2$\pm$0$\cdot$7 & $-$0$\cdot$2$\pm$0$\cdot$6 & 4$\cdot$4 & 1$\cdot$0 \\\\
Scaled  &$\lam_1$  & $-$1$\cdot$2$\pm$0$\cdot$8 & $-$2$\cdot$4$\pm$0$\cdot$8 & 14$\cdot$1 & 1$\cdot$0 & $-$0$\cdot$7$\pm$0$\cdot$6 & $-$2$\cdot$2$\pm$0$\cdot$8 & 13$\cdot$8 & 1$\cdot$0 \\
MCP     &$\lam_2$ & $-$0$\cdot$1$\pm$0$\cdot$6 & $-$0$\cdot$1$\pm$0$\cdot$6 & 3$\cdot$1 & 1$\cdot$0 & 0$\cdot$1$\pm$0$\cdot$6 & $-$0$\cdot$2$\pm$0$\cdot$6 & 3$\cdot$2 & 1$\cdot$0 \\
        &$\lam_3$ & 1$\cdot$5$\pm$1$\cdot$3 & 0$\cdot$6$\pm$1$\cdot$1 & 2$\cdot$4 & 0$\cdot$6 & 0$\cdot$6$\pm$0$\cdot$7 & $-$0$\cdot$1$\pm$0$\cdot$6 & 3$\cdot$0 & 1$\cdot$0 \\\\
Scaled  &$\lam_1$ & $-$0$\cdot$6$\pm$0$\cdot$6 & $-$2$\cdot$2$\pm$0$\cdot$7 & 14$\cdot$0 & 1$\cdot$0 & $-$0$\cdot$4$\pm$0$\cdot$6 & $-$2$\cdot$2$\pm$0$\cdot$8 & 13$\cdot$9 & 1$\cdot$0 \\
SCAD    &$\lam_2$ & 0$\cdot$8$\pm$1$\cdot$0 & $-$0$\cdot$1$\pm$0$\cdot$6 & 3$\cdot$1 & 1$\cdot$0 & 0$\cdot$4$\pm$0$\cdot$6 & $-$0$\cdot$3$\pm$0$\cdot$6 & 3$\cdot$8 & 1$\cdot$0 \\
        &$\lam_3$ & 3$\cdot$1$\pm$0$\cdot$7 & 0$\cdot$9$\pm$1$\cdot$1 & 2$\cdot$0 & 0$\cdot$3 & 1$\cdot$2$\pm$0$\cdot$7 & $-$0$\cdot$2$\pm$0$\cdot$6 & 3$\cdot$8 & 1$\cdot$0 \\\\
N-LASSO   && $-5\cdot$3 $\pm$ 2$\cdot$0 &&36$\cdot$6 &1$\cdot$0 & $-4\cdot$6 $\pm$ 2$\cdot$0 &&29$\cdot$6 &1$\cdot$0\\
RCV-SIS   &&  0$\cdot$2 $\pm$ 1$\cdot$4 &&50$\cdot$0 &0$\cdot$9 & $-0\cdot$1 $\pm$ 1$\cdot$4 &&50$\cdot$0 &1$\cdot$0\\
RCV-ISIS  &&  0$\cdot$5 $\pm$ 1$\cdot$7 &&30$\cdot$9 &0$\cdot$7 &  0$\cdot$2 $\pm$ 1$\cdot$2 &&29$\cdot$0 &0$\cdot$8\\
RCV-LASSO &&  0         $\pm$ 1$\cdot$3 &&31$\cdot$1 &0$\cdot$9 & $-0\cdot$3 $\pm$ 1$\cdot$1 &&26$\cdot$5 &1$\cdot$0\\
P-SCAD    && $-1\cdot$4 $\pm$ 1$\cdot$1 &&30$\cdot$0 &1$\cdot$0 & $-1\cdot$2 $\pm$ 1$\cdot$7 &&29$\cdot$9 &1$\cdot$0\\
CV-SCAD   &&  0$\cdot$7 $\pm$ 1$\cdot$2 &&30$\cdot$0 &1$\cdot$0 &  0$\cdot$9 $\pm$ 1$\cdot$3 &&29$\cdot$9 &1$\cdot$0\\
P-LASSO   && $-0\cdot$8 $\pm$ 2$\cdot$1 &&36$\cdot$5 &1$\cdot$0 & $-0\cdot$9 $\pm$ 1$\cdot$5 &&29$\cdot$6 &1$\cdot$0\\
CV-LASSO  &&  1$\cdot$4 $\pm$ 1$\cdot$1 &&36$\cdot$5 &1$\cdot$0 &  0$\cdot$8 $\pm$ 1$\cdot$0 &&29$\cdot$6 &1$\cdot$0\\
\end{tabular}}
\label{simu2}
\begin{tabnote}
PMLE, $\ell_1$ penalized maximum likelihood estimator; BC, bias-corrected estimator; MCP, minimax concave penalty; SCAD, smoothly clipped absolute deviation penalty; N, naive; RCV, refitted cross-validation; SIS, sure independent screening; ISIS, iterative SIS; P, plug-in method with degrees-of-freedom correction; CV, cross-validation; %SE, standard error;
AMS, average model size; SSP, relative frequency of sure screening
\end{tabnote}
\end{table}

\section{Numerical results}
\subsection{Simulation study}
In this section, we present some simulation results to compare five methods:
the scaled penalized methods with the $\ell_1$ penalty, the minimax concave penalty and
the smoothly clipped absolute deviation penalty,
the $\ell_1$ penalized maximum likelihood estimator \citep{StadlerBG10}, and
its bias correction \citep{SunZ10}.
The least squares estimator after model selection by these five methods is also studied.
The penalized maximum likelihood estimator is
\bes
\big(\hbbeta^{(\mathrm{pmle})},\hsigma^{(\mathrm{pmle})}\big) = \argmax_{\bbeta,\sigma}\Big(
 -\frac{{|}\by-\bX\bbeta{|}_2^2}{2\sigma^2 n}
 -\log\sigma-\lam_0\frac{{|}\bbeta{|}_1}{\sigma}\Big),
\ees
or equivalently the limit of the iteration
$\hsigma \leftarrow \{\by'(\by-\bX\hbbeta)/n\}^{1/2}$ and
$\hbbeta \leftarrow \hbbeta(\hsigma\lam_0)$.
The bias-corrected estimator is one iteration of (\ref{gen-alg}) with (\ref{alg}) from
$(\hbbeta^{(\mathrm{pmle})},\hsigma^{(\mathrm{pmle})})$ with $a=0$,
\bes
\hsigma^{(\mathrm{bc})} = {|}\by - \hbbeta(\hsigma^{(\mathrm{pmle})}\lam_0){|}_2/n^{1/2},\
\hbbeta^{(\mathrm{bc})} = \hbbeta(\hsigma^{(\mathrm{bc})}\lam_0).
\ees
Two simulation examples %and a real data set
are considered.

\begin{example}
We compare the five estimators at three penalty levels $\lam_j=\surd\{2^{j-1}(\log p)/n\}$, $j=1,2, 3$. The experiment has the setting of Example 2 in \citet{FanGH10}, with the smallest signal, $b=1/\surd3$. We provide their description of the simulation setting in our notation as follows: $\bX$ has independent and identically distributed Gaussian rows with marginal distribution $N(0,1)$, $\mathrm{corr}(x_i,x_j)={r_0}$
for $1\le i<j\le 50$ and $\mathrm{corr}(x_i,x_j) = 0$ otherwise,
$(n,p)=(200,2000)$, nonzero coefficients $\beta_j=1/\surd{3}$ for $j\in S=\{1,2,3\}$, and $\by-\bX\bbeta\sim N(0,\sigma^2\bI)$ with $\sigma=1$. Two configurations are considered: independent columns $\bx_j$ with ${r_0}=0$ and correlated first 50 columns $\bx_j$ with ${r_0}=0.5$. We set  $\gamma=2/(1-\max|\bx_k'\bx_j|/n)$ for the concave penalties.

The top section of Table \ref{simu2} presents our simulation results, while the bottom section
includes the simulation results of \citet{FanGH10} for several joint estimators of $(\bbeta,\sigma)$
using cross-validation, without repeating their experiment.
In addition to the bias and the standard error of the ratios $\hsigma/\sigma$ for the five original estimators
and $\hhsigma/\sigma$ for the least squares estimation after model selection,
we report the average model size $|\Shat|$ and the relative frequency of sure screening, $\Shat\supseteq S$,
as in \citet{FanGH10}, where $\Shat = \supp(\hbeta)$ is the selected model.

Without post processing, the scaled minimax concave penalized selector with the universal penalty level
$\lam_2=\surd\{(2/n)\log p\}$ clearly outperforms other procedures in this example.
However, the results of the least squares estimation after model selection at penalty level $\lam_2$ are nearly
identical to the top performer for all five methods.
In view of the results in average model size and sure screening proportion, the success of post processing at $\lam_2$
is clearly due to the success of model selection.
The five methods select too few variables at the larger penalty level $\lam_3$ and too many at the smaller $\lam_1$,
both leading to substantial bias in the estimation of $\sigma$ for $r_0=0$. For $r_0=0.5$, selecting a
slightly smaller model does not harm so much since a substantial portion of the effect of the missing variables is
explained by the selected variables correlated to them.
The minimax concave penalized selector is nearly unbiased in this example, so that it does not need post processing.
Cross-validation methods select about 30 variables when the true model size is 3.
This over selection is probably the reason for the large bias for most cross-validation methods
and large standard error for all of them.
\end{example}

\begin{table}
\def~{\hphantom{0}}
\captionsetup{width=4.5in}
\tbl{Performance of five methods in Example 2 at penalty levels
$%\lam_0=
\lam_j=\{2^{j-1}(\log p) /n\}^{1/2}$ $(j=1,2,3)$,
%, $\lam_2=\{2(\log p) /n\}^{1/2}$ and $\lam_3=2\{(\log p) /n\}^{1/2}$.
across 100 replications, in terms of the bias ($\times10$) and standard error ($\times10$) of
$\hsigma/\sigma$ and $\hhsigma/\sigma$,
the false positive, and the false negative}{
\begin{tabular}{c c cccc cccc}\\
&&\multicolumn{4}{c}{${r_0}=0.1$}&\multicolumn{4}{c}{${r_0}=0.9$}\\
Method & &\multicolumn{1}{c}{$\hsigma/\sigma$} & $\hhsigma/\sigma$ & FP & FN
         &\multicolumn{1}{c}{$\hsigma/\sigma$} & $\hhsigma/\sigma$ & FP & FN  \\\\
        &$\lam_1$ & 5.5$\pm$0.3 & 0.2$\pm$0.3 & 3 & 11 & 2.4$\pm$0.3 & $-$0.4$\pm$0.3 & 4 & 12\\
PMLE    &$\lam_2$ & 7.7$\pm$0.4 & 2.1$\pm$0.6 & 0 & 19 & 3.7$\pm$0.3 & $-$0.3$\pm$0.3 & 1 & 15\\
        &$\lam_3$ & 9.5$\pm$0.4 & 6.3$\pm$1.1 & 0 & 30 & 5.7$\pm$0.3 & $-$0.1$\pm$0.3 & 0 & 20 \\\\
        &$\lam_1$& 3.2$\pm$0.3 & $-$0.3$\pm$0.4 & 8 & 9 & 0.3$\pm$0.3 & $-$0.7$\pm$0.3 & 9 & 11 \\
BC      &$\lam_2$ & 6.1$\pm$0.5 & 1.6$\pm$0.5 & 0 & 17 & 1.2$\pm$0.3 & $-$0.3$\pm$0.3 & 2 & 13 \\
        &$\lam_3$ & 9.1$\pm$0.5 & 6.1$\pm$1.1 & 0 & 29 & 3.1$\pm$0.4 & $-$0.1$\pm$0.3 & 0 & 18 \\\\
Scaled  &$\lam_1$ & 1.9$\pm$0.4 & $-$0.8$\pm$0.4 & 14 & 7 & 0.0$\pm$0.3 & $-$0.8$\pm$0.3 & 11 & 11\\
lasso   &$\lam_2$ & 5.0$\pm$0.5 & 1.3$\pm$0.5 & 0 & 16 & 0.6$\pm$0.3 & $-$0.3$\pm$0.3 & 2 & 13 \\
        &$\lam_3$ & 8.9$\pm$0.6 & 6.0$\pm$1.2 & 0 & 29 & 1.8$\pm$0.4 & $-$0.2$\pm$0.3 & 0 & 16 \\\\
Scaled  &$\lam_1$ & $-$0.2$\pm$0.4 & $-$1.1$\pm$0.4 & 14 & 7 & 0.0$\pm$0.3 & $-$0.7$\pm$0.3 & 10 & 19\\
MCP     &$\lam_2$ & 1.8$\pm$0.5 & 0.7$\pm$0.5 & 0 & 13 & 0.6$\pm$0.3 & $-$0.2$\pm$0.3 & 1 & 20 \\
        &$\lam_3$ & 7.8$\pm$1.0 & 5.6$\pm$1.3 & 0 & 28 & 1.7$\pm$0.4 & 0.0$\pm$0.3 & 0 & 22 \\\\
Scaled  &$\lam_1$ & 0.6$\pm$0.4 & $-$1.0$\pm$0.4 & 14 & 7 & 0.0$\pm$0.3 & $-$0.8$\pm$0.3 & 10 & 15 \\
SCAD    &$\lam_2$ & 4.7$\pm$0.6 & 1.3$\pm$0.5 & 0 & 16 & 0.6$\pm$0.3 & $-$0.3$\pm$0.3 & 2 & 15 \\
        &$\lam_3$ & 8.9$\pm$0.6 & 6.0$\pm$1.2 & 0 & 29 & 1.8$\pm$0.4 & $-$0.2$\pm$0.3 & 0 & 17
\end{tabular}}
\label{simu1}
\begin{tabnote}
PMLE, $\ell_1$ penalized maximum likelihood estimator; BC, bias-corrected estimator; MCP, minimax concave penalty; SCAD, smoothly clipped absolute deviation penalty; FP, false positive; FN, false negative%SE, standard error
\end{tabnote}
\end{table}

\begin{example}%Example 2
This experiment has the same setting as in the simulation study in \citet{SunZ10}, where the scaled lasso and
the scaled minimax concave penalized selection are called the naive estimators.
%lasso and minimax concave penalized selection, respectively.
We provide the description of the simulation settings in \citet{SunZ10} in our notation as follows:
$(n,p)=(600,3000)$, %$\bX=(\bx_1,\dots,\bx_p)$,
the $\bx_j$ are normalized columns from a Gaussian random matrix with independent and identically distributed rows and correlation ${r_0}^{|k-j|}$ between the $j$-th and $k$-th entries within each row, $\gamma=2/(1-\max|\bx_k'\bx_j|/n)$ for the  minimax concave penalty and smoothly clipped absolute deviation penalty, the nonzero $\beta^*_j$ are composed of five blocks of $\beta_*(1,2,3,4,3,2,1)'$ centered at random multiples $j_1,\dots,j_5$ of 25, $\beta_*$ sets ${|}\bX\bbeta^*{|}_2^2=3n$, and $\by-\bX\bbeta^*$ is a vector of independent and identically distributed $N(0,1)$ variables.
Thus, the true noise level is $\sigma=1$.
We set ${r_0}=0.1$ for low correlation between design vectors and
${r_0}=0.9$ for high correlation.

\begin{table}
\def~{\hphantom{0}}
\tbl{Estimated coefficients ($\times10^3$) of selected probe sets by four methods in the real data example: the lasso with cross-validation, the lasso with adjusted cross-validation, and the scaled lasso and minimax concave penalized selection at $\lam_0=\lam_2=\{2(\log p)/n\}^{1/2}$}{
\begin{tabular}{lrrrrrrrr}\\
Probe ID 	          & \multicolumn{2}{c}{C-V lasso} & \multicolumn{2}{c}{C-V lasso/LSE} & \multicolumn{2}{c}{Scaled lasso}  & \multicolumn{2}{c}{Scaled MCP}\\
\#cov                          & 200         &  3000         & 200           & 3000         & 200            & 3000          & 200           & 3000 \\
1369353\_at 	               &  $-9\cdot12$ & $-7\cdot$13* & $-7\cdot$09  & $-2\cdot$79* &$-7\cdot$3 	 & $-4\cdot$03*	 &   	 &   \\
1370052\_at$^{\vartriangle}$   &   	          &  3$\cdot$65  &              &              &   	         &   	         &   	 &   \\
1370429\_at 	               &  $-3\cdot$22 &              & $-8\cdot$94* & $-11\cdot$06 & $-8\cdot$78*& $-9\cdot$36 & $-16\cdot$37*&\\
1371242\_at 	               &  $-6\cdot$66 &   	         &              &              &   	         & &   	 &   \\
1374106\_at 	               &  8$\cdot$88* & 10$\cdot$58* & 7$\cdot$33*  & 6$\cdot$14*   & 7$\cdot$45*& 7$\cdot$01*& 8$\cdot$47*& 10$\cdot$02* \\
1374131\_at 	               &  4$\cdot$07  & 0$\cdot$80   &               &              &  	         &   	         &   	 &   \\
1375585\_at$^{\vartriangle}$   &   	          &   	         &               &              &   	     &  0$\cdot$58 	 &   	 &   \\
1384204\_at 	               &   	          &   	         & 0$\cdot$70    &              & 0$\cdot$70 &       	     &   	 &   \\
1387060\_at$^{\vartriangle}$   &   	          &  3$\cdot$50* &               &              &   	     &   	         &   	 &   \\
1388538\_at$^{\vartriangle}$   &   	          &  1$\cdot$42  &               &              &   	     &               &   	 &   \\
1389584\_at 	               &  17$\cdot$16* & 25$\cdot$39* & 20$\cdot$07* & 19$\cdot$61* & 19$\cdot$97* &  21$\cdot$18* &  45$\cdot$75* 	 &  50$\cdot$49* \\
1393979\_at 	               & $-1\cdot$81 &   	         & $-0\cdot$22   &              & $-0\cdot$4 &   	         &   	 &   \\
1379079\_at$^{\vartriangle}$   &   	         & $-1\cdot$43*  &               &              &   	     &   	         &   	 &   \\
1379495\_at 	               &   	         & 4$\cdot$84 	 & 1$\cdot$73    &              & 1$\cdot$71 &  $1\cdot00$ 	 &   	 &   \\
1379971\_at 	               &  13$\cdot$56* & 13$\cdot$1 & 11$\cdot$19*   &  8$\cdot$81  & 11$\cdot$25*&  9$\cdot$52  &   	 &   \\
1380033\_at 	               &  8$\cdot$69  &   	         & 2$\cdot$76    &              & 2$\cdot$97 &   	         & 6$\cdot$75*&\\
1380070\_at$^{\vartriangle}$   &   	         & $0\cdot$19 	 &               &              &   	     &   	         &   	 &   \\
1381787\_at 	               &  $-2\cdot$05 &   	         & $-2\cdot$01   &              &  $-2\cdot$11 	 &   	     &   	 &   \\
1382452\_at$^{\vartriangle}$   &   	         & 12$\cdot$93   &               &              &   	     & 1$\cdot$63 	 &&12$\cdot$91*\\
1382835\_at 	               &  12$\cdot$64 & 5$\cdot$79   & 3$\cdot$73    &              & 4$\cdot$15 	 &   	         &       &   \\
1383110\_at 	               & 9$\cdot$03* & 19$\cdot$99 & 15$\cdot$10* & 16$\cdot$43 & 14$\cdot$97*	 & 16$\cdot$69  & 15$\cdot$80*	 & 23$\cdot$01* \\
1383522\_at 	               & 3$\cdot$03* &  	         &         *      &              &   *       &   	         &   	 &   \\
1383673\_at 	               &  5$\cdot$54 & 6$\cdot$12* 	 & 6$\cdot$07    & 6$\cdot$15*   & 6$\cdot$08 	 &  6$\cdot$47*	 &	 &   \\
1383749\_at 	               &  $-13\cdot$86 & $-10\cdot$85* & $-10\cdot$84  & $-6\cdot$7*   & $-11\cdot$02 	 &  $-8\cdot$07* & $-2\cdot$74 *	 &  $-1\cdot$11* \\
1383996\_at 	               &  25$\cdot$01* &  17$\cdot$82* & 18$\cdot$61* & 14$\cdot$30*   & 18$\cdot$88* 	 &  15$\cdot$52* & 25$\cdot$07* 	 &  19$\cdot$19* \\
1385687\_at$^{\vartriangle}$   &   	         &  $-0\cdot$99 	 &               &              &   	         &   	         & 	 &   \\
1386683\_at 	               &   	         &   	         &              & 4$\cdot$60*       &               &  2$\cdot$90*   & 	 &   \\
1390788\_a\_at 	               &  0$\cdot$92 &   	         &               &              &   	         &   	         &   	 &   \\
1392692\_at$^{\vartriangle}$   &   	         & 1$\cdot$74 	 &               &              &   	         &   	         &   	 &   \\
1393382\_at 	               &  2$\cdot$43 &   	         &               &              &   	         &   	         &   	 &   \\
1393684\_at 	               &  1$\cdot$59 &   	         &               &              &   	         &  	         &   	 &   \\
1395076\_at$^{\vartriangle}$   &   	         &   	         &               &              &   	         & 0$\cdot$23 	 &   	 &   \\
1397489\_at$^{\vartriangle}$   &   	         & 3$\cdot$33 	 &               &              &   	         &  	         &   	 &   \\\\
Model size                     & 19          & 20            &15&10  &15             & 14            &7             &  6\\
$\widehat{\lam}=\hsigma\lam_0$ & 0.0103      & 0.0163        &0.025& 0.035  &0.0243         & 0.0315        &0.0244        & 0.0304\\
\end{tabular}
}
\label{varselect}
\begin{tabnote}
%C-V, cross-validation; cov, covariate; LSE, least squares estimator.
C-V lasso/LSE, the lasso estimator with the adjusted cross-validation; MCP, minimax concave penalty;
\#cov, the number of covariates considered; $\vartriangle$, probes not in the smaller set of 200
probes; * , covariates selected by stability selection
\end{tabnote}
\end{table}

We summarize the simulation results in Table \ref{simu1}, which provides the bias and standard error of the ratios
$\hsigma/\sigma$ for the selector and $\hhsigma/\sigma$ for the least squares estimator after model selection,
the false positive $|\Shat\setminus S|$, and the false negative $|S\setminus \Shat|$.
Without post processing,
the scaled lasso outperforms the penalized maximum likelihood estimator and its bias correction,
which are also based on the lasso path. However, the scaled lasso estimate of $\sigma$ is still biased,
and the level of bias is comparable with the order of the error bound $(|S|/n)\log p= 0.47$ in (\ref{th-2-3}).
This and the failure in sure screening by any method reflect the difficulty of this example,
where $|S|=35$ is not small and the signal is weak, with average $\beta_*=0.11$
and $0.05$ respectively for $r_0=0.1$ and $r_0=0.9$.
%$(2/n)\log{p \choose |\beta^*|_0}=0.63$, a complexity measure required to be small for consistent
%prediction and variable selection in high-dimensional regression.
From this perspective, the scaled minimax concave penalized selection, designed to reduce the bias
of the lasso, performs quite well at the universal penalty level $\lam_2=\surd\{(2/n)\log p\}$, especially with post processing.
The least squares estimation after model selection reduces the bias substantially in all cases,
even without successful model selection.
This example seems to suggest the possibility of improving the performance of the scaled estimators
at a penalty level $\lam$ smaller than the universal penalty level $\lam_2$, a simple
upper bound for $|X'(y-X\beta)/(\sigma^* n)|_\infty$ under $\mathrm{pr}_{\bbeta,\sigma}$. However, consistent
variable selection requires $\lam\ge |X'(y-X\beta^*)/(\sigma^* n)|_\infty$ as Example 1 demonstrates.
Since the scaled lasso estimator $\hsigma$ is an increasing function of the penalty level by Proposition \ref{prop-1},
it is always possible to reduce the bias of $\hsigma$ to zero by taking a specific $\lam$ for each
specific example. However, the two examples in our simulation experiment demonstrate the difficulty of
picking such a penalty level consistently.
%For the estimation of $\beta$, the scaled minimax concave penalized selection demonstrates the strongest
%performance for ${r_0}=0.1$, while the scaled lasso, which typically provides more diverse variable selection,
%is the best for ${r_0}=0.9$. This suggests potential improvements over the choice
%$\gamma=2/(1-\max|\bx_k'\bx_j|/n)$, since the minimax concave penalty becomes the $\ell_1$ penalty with
%$\gamma=\infty$.
%In this example, the signal is weak but not ignorable as the average of $|\beta^*|_2$
%is 1.58 for $r_0=0.1$ and 0.79 for $r=0.9$.
\end{example}

\subsection{Real data example}
We study a data set containing 18976 probes for 120 rats, which is reported in \citet{Scheetz06}. Our goal is to find probes that are related to that of gene TRIM32, which has been found to cause Bardet--Biedl syndrome, a genetically heterogeneous disease of multiple organ systems including the retina. We consider linear regression with the probe from TRIM32, 1389163\_at, as the response variable.
As in \citet{HuangMZ08}, we focus on 3000 probes with the largest variances among the 18975 covariates and consider two approaches.
The first approach is to regress on these $p=3000$ probes. The second approach is to regress on the 200 probes among the 3000 with the largest marginal correlation coefficients with TRIM32. For the cross-validation lasso, we randomly partition the data 1000 times, each with a training set of size 80 and a validation set of size 40. For each partition, the penalty level $\lam$ is selected by minimizing the prediction mean squared error in the validation set. Then we compute the lasso estimator with all 120 observations
at the penalty level equal to the median of the selected penalty levels with the 1000 random partitions. Since cross-validation tends to choose a larger model, we also consider an adjusted version using the cross-validated error of the least squares estimator after the lasso selection. For the minimax concave penalty, we set $\gamma=2/(1-\sigma_{0.95})=6.37$, where $\sigma_{0.95}$ is the 95\% quantile of $|\bx'_k\bx_j|/n$.

\begin{figure}
\begin{center}
%\vspace{-.4in}
\includegraphics[width=4.5in,height=2.3in]{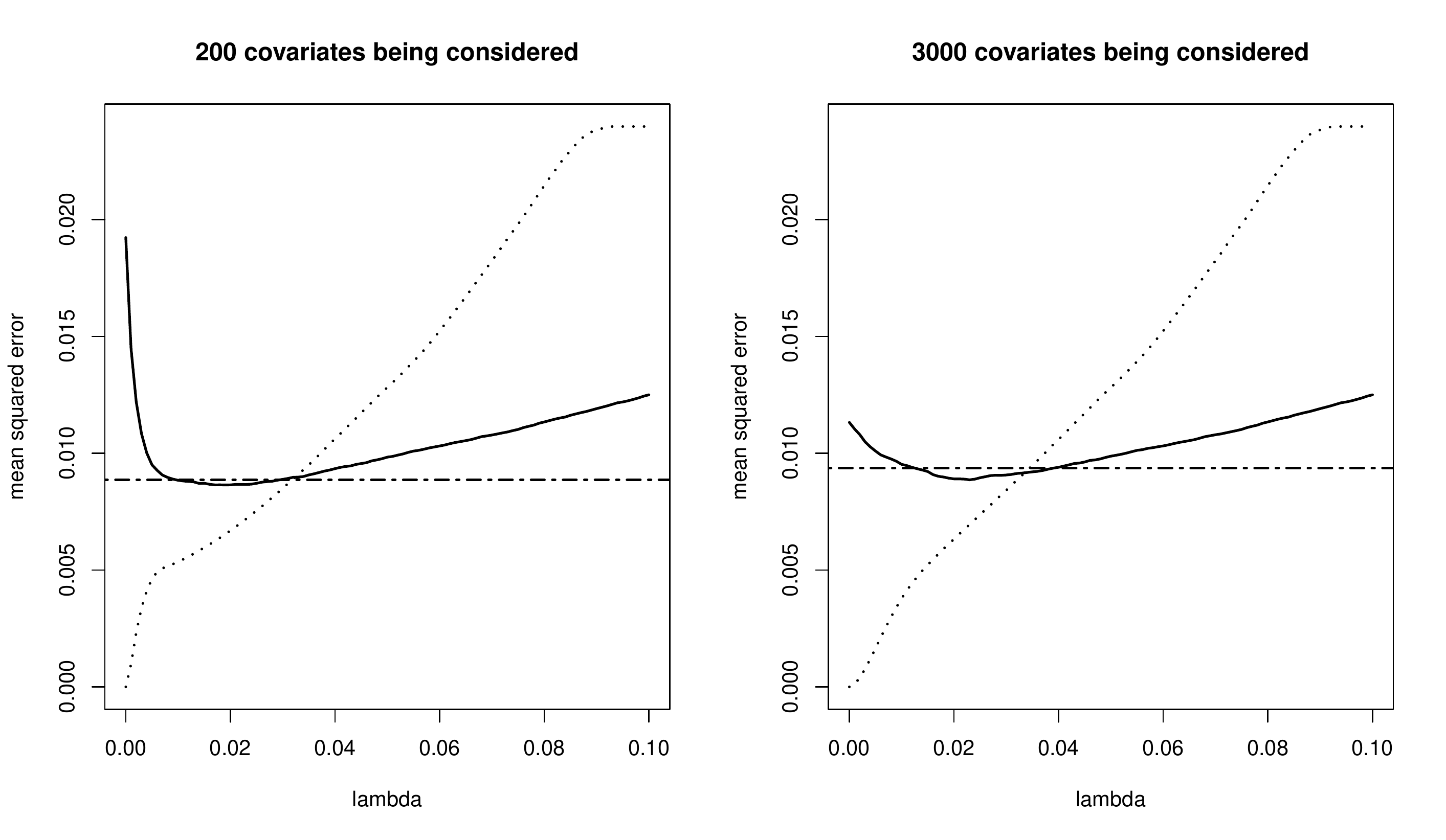}
%\includegraphics[width=4.5in,height=2.3in]{mse-lam-new.eps}
%\figurebox{20pc}{25pc}{}[hist-ex1-lam1.eps]
%\vspace{-.6in}
\captionsetup{width=4.5in}%\vspace{-.5in}
\caption{Mean squared prediction error against penalty level:
solid curve, testing error of the lasso for fixed $\lam$;
dotted curve, training error of the lasso for fixed $\lam$;
dot-dashed line, testing error of the scaled lasso with fixed $\lam_0=\surd\{(2/n)\log p\}$,
or equivalently the lasso at penalty level $\hsigma\lam_0$.}
\label{mse-lam}
\end{center}
\vspace{-.2in}
\end{figure}

\begin{table}
\def~{\hphantom{0}}
\captionsetup{width=4.5in}
\tbl{Prediction performance of eight methods in the real data example at penalty levels
$\lam_0=\lam_j=\{2^{j-1}(\log p) /n\}^{1/2} (j=1,2,3)$, in terms of the prediction mean squared
error ($\times10^2$), estimated model size, and correlation coefficient ($\times10^2$)
between fitted and observed responses}{
\begin{tabular}{ccrrrrrr}\\
&& \multicolumn{3}{c}{\#cov = 200} & \multicolumn{3}{c}{\#cov = 3000} \\
Method&& P-MSE & ${|}\hbbeta{|}_0$ & corr & P-MSE & ${|}\hbbeta{|}_0$&corr\\\\
PMLE&$\lam_1$	& 0$\cdot$94 	& 12 	& 67$\cdot$1 	& 0$\cdot$97 	& 12 	& 63$\cdot$5 	\\
&$\lam_2$	& 0$\cdot$97 	& 9 	& 63$\cdot$5 	& 1$\cdot$04 	& 7 	& 59$\cdot$8 	\\
&$\lam_3$	& 1$\cdot$09 	& 6 	& 57$\cdot$6 	& 1$\cdot$23 	& 3 	& 52$\cdot$2 	\\
%&$\lam_4$	& 1$\cdot$36 	& 2 	& 49$\cdot$5 	& 1$\cdot$52 	& 0 	& 29$\cdot$5 	\\
\\
BC&$\lam_1$	& 0$\cdot$93 	& 13 	& 68$\cdot$2 	& 0$\cdot$96 	& 15 	& 64$\cdot$6 	\\
&$\lam_2$	& 0$\cdot$95 	& 10 	& 64$\cdot$7 	& 1$\cdot$01 	& 9 	& 60$\cdot$9 	\\
&$\lam_3$	& 1$\cdot$04 	& 7 	& 59$\cdot$4 	& 1$\cdot$17 	& 4 	& 53$\cdot$1 	\\
%&$\lam_4$	& 1$\cdot$32 	& 2 	& 50$\cdot$5 	& 1$\cdot$52 	& 0 	& 29$\cdot$5 	\\
\\
Scaled lasso&$\lam_1$	& 0$\cdot$93 	& 13 	& 68$\cdot$4 	& 0$\cdot$96 	& 17 	& 64$\cdot$3 	\\
&$\lam_2$	& 0$\cdot$94 	& 10 	& 65$\cdot$2 	& 0$\cdot$98 	& 10 	& 61$\cdot$7 	\\
&$\lam_3$	& 1$\cdot$02 	& 7 	& 60$\cdot$8 	& 1$\cdot$13 	& 5 	& 53$\cdot$9 	\\
%&$\lam_4$	& 1$\cdot$26 	& 3 	& 51$\cdot$7 	& 1$\cdot$52 	& 0 	& 29$\cdot$5 	\\
\\
Scaled MCP&$\lam_1$	& 1$\cdot$03 	& 6 	& 66$\cdot$4 	& 1$\cdot$08 	& 8 	& 62$\cdot$3 	\\
&$\lam_2$	& 1$\cdot$03 	& 5 	& 63$\cdot$4 	& 1$\cdot$06 	& 5 	& 60$\cdot$0 	\\
&$\lam_3$	& 1$\cdot$12 	& 3 	& 59$\cdot$1 	& 1$\cdot$18 	& 2 	& 54$\cdot$9 	\\
%&$\lam_4$	& 1$\cdot$28 	& 1 	& 52$\cdot$4 	& 1$\cdot$33 	& 1 	& 51$\cdot$9 	\\
\\
Scaled SCAD&$\lam_1$	& 1$\cdot$00 	& 11 	& 68$\cdot$9 	& 1$\cdot$01 	& 14 	& 65$\cdot$1 	\\
&$\lam_2$	& 0$\cdot$95 	& 10 	& 68$\cdot$8 	& 0$\cdot$98 	& 10 	& 65$\cdot$9 	\\
&$\lam_3$	& 1$\cdot$01 	& 8 	& 65$\cdot$0 	& 1$\cdot$09 	& 5 	& 59$\cdot$7 	\\
%&$\lam_4$	& 1$\cdot$32 	& 2 	& 54$\cdot$8 	& 1$\cdot$41 	& 1 	& 51$\cdot$9 	\\
\\
C-V lasso&	& 0$\cdot$94 	& 15 	& 69$\cdot$0 	& 0$\cdot$99 	& 25 	& 63$\cdot$8 	\\
C-V lasso/LSE1 &    & 0$\cdot$97 	& 11 	& 64$\cdot$8 	& 0$\cdot$98 	& 12 	& 62$\cdot$5 	\\
C-V lasso/LSE2 &    & 0$\cdot$97 	& 11 	& 66$\cdot$8 	& 1$\cdot$09 	& 12 	& 62$\cdot$6 	\\
\end{tabular}
}
\label{tab-pred}
\begin{tabnote}
PMLE, $\ell_1$ penalized maximum likelihood estimator; BC, bias-corrected PMLE;
MCP, minimax concave penalty; SCAD, smoothly clipped absolute deviation penalty;
C-V lasso/LSE1, the lasso with adjusted cross-validation;
C-V lasso/LSE2, the least squares estimator after the lasso selection with adjusted cross-validation,
\#cov, the number of covariates considered; corr, the correlation coefficient between
fitted and observed responses;  P-MSE, prediction mean squared error
\end{tabnote}
\end{table}

Table \ref{varselect} shows the probe sets identified by four methods: the cross-validation lasso, its adjusted version,
the scaled lasso at at universal penalty level $\lam_2=\{2(\log p)/n\}^{1/2}$,
and the minimax concave penalized selection at the same penalty level.
We apply stability selection \citep{MeinshausenB10} to check the reliability of selection.
Let $W_1,\dots,W_p$ be independent variables with $P(W=0.2)=P(W=1)=1/2$ and
\bes
\hbbeta^{W}=\arg\min_{b}\frac{|\by-\bX b|_2^2}{2n} + \hlam\sum_{j=1}^p |\beta_j|/W_j,
\ees
where $\hlam$ is the penalty level chosen by individual methods.
Stability selection selects variables with nonzero estimated $\hbeta^{W}_j$ over 50 times in 100 replications.
We observe that the scaled minimax concave penalized selector produces most sparse and most
stable selection, followed by the adjusted cross-validation, the scaled lasso and then the plain cross-validation.
The selection results are consistent among the four methods in the sense that the selected models are almost nested.
Since the model size is between 6 and 8 by stability selection in all 8 cases and by the scaled minimax
concave penalized selection for both $p=200$ and $p=3000$, these two methods provide most consistent results.
The scaled lasso and the adjusted cross-validation yield identical
lasso and stability selections for $p=200$ and identical stability selection for $p=3000$.

We also compare the prediction performance of the scaled lasso with that of the lasso with the best fixed penalty level. We compute the scaled estimators in 1000 replications. In each replication, the dataset is split at random into a training set with 80 observations and a test set with 40 observations. The prediction mean squared error is computed within the test set, while the scaled estimators and the lasso estimator with fixed penalty level $\lam$ are computed based on the training set. Figure \ref{mse-lam} demonstrates that in prediction, the scaled lasso with $\lam_0$ chosen as $\lam_2=\{2(\log p)/n\}^{1/2}$ performs almost as well as the lasso with the optimal fixed $\lam$.

In addition, we compare the prediction performance of all the estimators mentioned in this section. In each replication, we compute the penalized maximum likelihood estimator, its bias-correction, and scaled penalization methods based on the training set of 80 observations. For cross-validation, the training set of 80 observations is further partitioned at random 100 times into two groups of sizes 60 and 20, and a penalty level is selected by minimizing the estimated loss in the smaller group for the lasso estimator based on the larger group. This selected penalty level is then used for the lasso with the entire training set. Thus, the cross-validation lasso is also based on the training set with 80 observations. For the penalty level selected by the adjusted cross-validation, two estimators are considered: the lasso estimator
and the least squares estimator after the lasso selection.
In Table \ref{tab-pred}, we present the medians of the prediction mean squared error and the selected model size in the 200 replications.
The scaled lasso has comparable prediction performance as cross-validation. Again, Table \ref{tab-pred} suggests that original cross-validation tends to choose larger models, while adjusted cross-validation leads to  results comparable with the scaled lasso.

\section{Discussion}

In the theoretical analysis, we have considered $\lam_0=A\{(2/n)\log p\}^{1/2}$ with $A>1$.
This choice is somewhat conservative from a number of points of view.
Simulation results suggest that the requirement $A>1$ is a mathematical technicality.
If $|X'\varepsilon/n|_\infty\le\lam_*$ with large probability for a standard normal
vector $\varepsilon$, the theoretical results in this paper are all valid under
$\mathrm{pr}_{\bbeta,\sigma}$ when $\lam_0$ is replaced by the smaller $\min(\lam_0,A\lam_*)$.
The value of $\lam_*$ can be estimated by simulation with the given $X$ and separately generated $\varepsilon$.
 A somewhat sharper theoretical choice of $\lam_0$ is $A\{(2/n)\log(p/s)\}^{1/2}$ with
the unknown $s=|\beta^*|_0$ \citep{Zhang10-mc+}, or its simulated version with
$\lam_*=\max_{|T|=s}|X_T'\varepsilon|_2/|T|^{1/2}$.
The difference between the two $\lam_0$ is limited unless $\log p = \{1+o(1)\}\log n$.
A reviewer called our attention to an unpublished 2011 report by Baraud, Giraud and Huet,
available at http://arxiv.org/abs/1007.2096,
whose method can be used to select a penalty level to nearly minimize the order of a penalized prediction error.
This may also justify the use of smaller estimated penalty levels.

In the proof of our theoretical results for the scaled lasso, we use oracle inequalities for fixed penalty which unify and somewhat sharpen existing results. We now present this result. Define
\bel{eta^*}
\eta^*(\lam,\xi) = \min_T\ 2^{-1} \Big[\eta(\lam,\xi,\bbeta^*,T)
+\big\{\eta^2(\lam,\xi,\bbeta^*,T)
-16\lam^2{|}\bbeta^*_{T^c}{|}_1^2\big\}^{1/2}\Big]
\eel
as a sharper version of $\eta(\lam,\xi,\bbeta^*,T)$ in (\ref{eta}).

{\theorem\label{th-3}
Let $\hbbeta(\lam)$ be the minimizer of (\ref{pen-loss-beta}) with $\rho(t)=t$.
Let $\bbeta^*\in \real^p$ be a target vector and $\xi>1$.  Then, in the event ${|}\bX'(\by-\bX\bbeta^*){|}_\infty/n \le \lam(\xi-1)/(\xi+1)$, we have
\bel{th-3-1}
{|}\bX\hbbeta(\lam)-\bX\bbeta^*{|}_2^2/n\le
\min\big\{\eta_*(\lam,\xi),\eta^*(\lam,\xi)\big\}
\eel
with $\eta_*(\lam,\xi)$ in (\ref{min-eta}). Moreover, in the same event and with  $\mu(\lam,\xi)$ in (\ref{mu}),
\bel{th-3-2}
{|}\hbbeta(\lam)-\bbeta^*{|}_1\le \mu(\lam,\xi).
\eel
}

The interpretations of (\ref{th-3-1}) and (\ref{th-3-2}) are given in (\ref{th-1-3}) and (\ref{ell_1-rate}), along with their relationship to several existing results.  We note here that the
condition $\kappa(\xi,S)\asymp 1$ for (\ref{th-1-3}) and (\ref{ell_1-rate}), weaker than
the parallel condition on the restricted eigenvalue \citep{BickelRT09}, can be slightly weakened
by using $F_1(\xi,S)$ in (\ref{scif}) \citep{YeZ10}.

\section*{Acknowledgement}
This research was supported by the National Science Foundation and the National Security Agency. We thank Jian Huang for sharing the gene expression data, and reviewers for valuable suggestions.

\appendix
\section*{Appendix}

Here we prove Proposition \ref{prop-1}, Theorem \ref{th-3}, Theorem \ref{th-1}, Theorem \ref{th-2} and then
Theorem \ref{th-mleas}.

\medskip
\begin{proof}[of Proposition \ref{prop-1}]
(i) Since $\hbbeta=\hbbeta(\sigma\lam_0)$ is a solution of (\ref{KKT})
at $\lam=\sigma\lam_0$,
\bes
\Big\{(\pa/\pa\bw)L_{\lam_0}(\bw,\sigma)\Big|_{\bw=\hbbeta(\sigma\lam_0)}\Big\}_j = 0,
\ees for all $\hbeta_j(\sigma\lam_0)\neq 0$.
Since $\{j: \hbeta_j(\lam)\}$ is unchanged in a neighborhood of $\sigma\lam_0$, $[(\pa/\pa\sigma)\{\hbbeta(\sigma\lam_0)/\sigma\}]_j=0$
for $\hbeta_j(\sigma\lam_0)=0$. Thus,
\bes
&& \frac{\pa}{\pa\sigma}L_{\lam_0}\{\hbbeta(\sigma\lam_0),\sigma\}
= \frac{\pa}{\pa t}L_{\lam_0}\{\hbbeta(\sigma\lam_0),t\}\Big|_{t=\sigma}
= \frac{1-a}{2} - \frac{{|}\by-\bX\hbbeta(\sigma\lam_0){|}_2^2}{2n\sigma^2}.
\ees
(ii) The convergence of (\ref{gen-alg}) and (\ref{alg}) follows from the joint convexity of $L_{\lam_0}(\beta,\sigma)$.
The scale invariance follows from $L_0(c\bbeta,c\sigma;\bX,c\by)=c L_0(\bbeta,\sigma;\bX,\by)$, where $L_0(\bbeta,\sigma;\bX,\by)$ expresses the dependence of (\ref{pen-loss-joint}) on the data $(\bX,\by)$.
\end{proof}

\medskip
\begin{proof}[of Theorem \ref{th-3}] (i) Let $\hbeta=\hbeta(\lam)$.
Since $\sigma^*z^*={|}\bX'(\by-\bX\bbeta^*){|}_\infty/n$
and ${\dot\rho}(|\hbeta_j|/\lam)=1$ for $\hbeta_j\neq 0$,  the inner product of $w-\hbeta$
and  the Karush--Kuhn--Tucker condition (\ref{KKT}) yield
\bes
(\bX\hbbeta-\bX\bw)'(\bX\hbeta-X\beta^*)/n
\le \lam({|}\bw{|}_1-{|}\hbbeta{|}_1)+ \sigma^* z^*{|}\bw-\hbbeta{|}_1.
\ees
Since $2(\bX\hbbeta-\bX\bw)'(\bX\hbeta-X\beta^*)=
{|}\bX\hbbeta-\bX\bw{|}_2^2+{|}\bX\bh{|}_2^2-{|}\bX\bbeta^*-\bX\bw{|}_2^2$,
this gives the basic inequality (\ref{basic}).
Let $\bh=\hbbeta-\bbeta^*$.
Since $\sigma^*z^*\le\lam(\xi-1)/(\xi+1)$,
$\lam\{{|}\bw{|}_1-{|}\hbbeta{|}_1\}+ \sigma^* z^*{|}\bw-\hbbeta{|}_1$
is no greater than $b{|}(\bw-\hbbeta)_{T}{|}_1
+2\lam{|}\bw_{T^c}{|}_1-(b/\xi){|}(\bw-\hbbeta)_{T^c}{|}_1$ with $b=2\xi\lam/(\xi+1)$.
Thus, (\ref{basic}) implies
\bel{thm-pred-pf}
{|}\bX\hbbeta-\bX\bw{|}_2^2/n+{|}\bX\bh{|}_2^2/n+(2b/\xi){|}(\bw-\hbbeta)_{T^c}{|}_1
\le 2c + 2b{|}(\bw-\hbbeta)_{T}{|}_1
\eel
with $c={|}\bX\bbeta^*-\bX\bw{|}_2^2/(2n)+2\lam {|}\bw_{T^c}{|}_1$.
For $T=\emptyset$ and $w=\beta^*$, (\ref{thm-pred-pf}) directly yields
$|Xh|_2^2/n\le c = 2\lam|\beta^*|_1$.
For general $\{w,T\}$, we want to prove
\bes
{|}\bX\bh{|}_2^2/n \le \eta(\lam,\xi,\bw,T) = 2c + b^2/a,\quad a = \kappa^2(\xi,T)/|T|.
\ees
It suffices to consider ${|}\bX\bh{|}_2^2/n \ge 2c$. In this case,
$\hbbeta-\bw\in \scrC(\xi,T)$ by (\ref{thm-pred-pf}), so that by (\ref{compatible})
\bel{thm-pred-pf-1}
a {|}(\bw-\hbbeta)_{T}{|}_1^2 \le {|}\bX\hbbeta-\bX\bw{|}_2^2/n.
\eel
Let $x={|}(\bw-\hbbeta)_{T}{|}_1$ and $y={|}\bX\bh{|}_2^2/n$.
It follows from (\ref{thm-pred-pf}) and (\ref{thm-pred-pf-1}) that
$ax^2+y \le 2c + 2bx$. For such $(x,y)$, $y-2c\le \max_x\{2bx - ax^2\}=b^2/a$.
This gives $y\le 2c+b^2/a = \eta(\lam,\xi,\bw,T)$.

For $\bw=\bbeta^*$, it suffices to consider the case
$y > c=2\lam{|}\bbeta^*_{T^c}{|}_1$, where
the cone condition holds for $\hbeta-\beta^*$.
Now, $(x,y)$ satisfies $ax^2\le y\le c+bx$.
The maximum of $y$, attained at $ax^2=c+bx$, is
\bel{thm-pred-pf-2}
c+b\{b+(b^2+4ac)^{1/2}\}/(2a)
= \big[\eta(\lam,\xi,\bbeta^*,T) +\{\eta^2(\lam,\xi,\bbeta^*,T)-4c^2\}^{1/2}\big]/2.
\eel

(ii) Let $0< \nu  <1$ and $T\subset\{1,\ldots,p\}$.
It follows from (\ref{thm-pred-pf}) with $\bw=\bbeta^*$ that
\bes%l{new-basic1}
(1+\xi){|}\bX\bh{|}_2^2/n + 2\lam {|}\bh_{T^c}{|}_1
\le 2\lam(\xi+1){|}\bbeta^*_{T^c}{|}_1+2\xi\lam{|}\bh_T{|}_1.
\ees
It suffices to consider $\nu  |\bh|_1\ge (\xi+1)|\bbeta^*_{T^c}|_1$. In this case
\bes
(1+\xi){|}\bX\bh{|}_2^2/n + 2\lam(1-\nu ){|}\bh_{T^c}{|}_1
\le 2\lam(\xi+\nu){|}\bh_T{|}_1.
\ees
Thus, $(1-\nu ){|}\bh_{T^c}{|}_1\le (\xi+\nu ){|}\bh_{T}{|}_1$, or equivalently
$\bh\in \scrC\{(\xi+\nu )/(1-\nu ),T\}$. It follows from (\ref{compatible}) that
${|}\bX\bh{|}_2^2/n\ge {|}\bh_T{|}_1^2 \kappa^2\{(\xi+\nu )/(1-\nu ),T\}/|T|$,
so that
\bel{basic1a1}
(1+\xi){|}\bh_T{|}_1^2\kappa^2\{(\xi+\nu )/(1-\nu ),T\}/|T| + 2(1-\nu )\lam{|}\bh_{T^c}{|}_1
\le 2(\xi+\nu )\lam{|}\bh_T{|}_1.
\eel
Let $x={|}\bh_T{|}_1$ and $y={|}\bh_{T^c}{|}_1$. Write (\ref{basic1a1}) as
$ax^2+by \le cx$. Subject to this inequality, the maximum of $x+y$
is $\max_{x\ge 0}\{x+(cx-ax^2)/b\}$. This maximum, attained at
$2ax = b+c$, is $x(b+c)/(2b)=(b+c)^2/(4ab)$. Thus,
\bes
{|}\bh{|}_1\le \frac{\{2(\xi+1)\lam\}^2 |T|}{4(1+\xi)\kappa^2\{(\xi+\nu )/(1-\nu ),T\}\{2(1-\nu)\lam\}}
= \frac{(\xi+1)\lam |T|/(1-\nu)}{2 \kappa^2\{(\xi+\nu )/(1-\nu ),T\}}.
\ees
This gives ${|}\bh{|}_1\le \mu(\lam,\xi)$ for $\nu  |\bh|_1\ge (\xi+1)|\bbeta^*_{T^c}|_1$.
\end{proof}

\begin{proof}[of Theorem \ref{th-1}]
Assume $\tau_0<1$ without loss of generality.
Consider $t\ge\sigma^*(1-\tau_0)$
and the penalty level $\lam=t\lam_0$ for the lasso.
Since $z^*\sigma^*\le \sigma^*(1-\tau_0)\lam_0(\xi-1)/(\xi+1)\le \lam(\xi-1)/(\xi+1)$
and $\sigma^*={|}\by-\bX\bbeta^*{|}_2\big/n^{1/2}$,
the Cauchy--Schwarz inequality and (\ref{th-3-1}) imply
\bes
\big|{|}\by-\bX\hbbeta({t}\lam_0){|}_2\big/n^{1/2} - \sigma^*\big|
\le {|}\bX\hbbeta({t}\lam_0)-\bX\bbeta^*{|}_2\big/n^{1/2}
\le \eta_*^{1/2}(t\lam_0,\xi).
\ees
Since $\eta_*^{1/2}(t\lam_0,\xi)\le \sigma^*\tau_0$ for $t<\sigma^*$,
the derivative (\ref{prop-1-1}) of the loss with $a=0$ satisfies
\bes
2{t}^2\frac{\pa}{\pa{t}}L_{\lam_0}\{\hbbeta({t}\lam_0),{t}\}
= {t}^2 - {|}\by-\bX\hbbeta({t}\lam_0){|}_2^2/n
\le t^2-(\sigma^*)^2(1-\tau_0)^2 = 0\ \hbox{ at $t=\sigma^*(1-\tau_0)$}.
\ees
This implies $\hsigma \ge \sigma^*(1-\tau_0)$
by the strict
convexity of the profile loss (\ref{pen-loss-joint}) in $\sigma$. For $t>\sigma^*$,
$\eta_*^{1/2}(t\lam_0,\xi)\le t\tau_0$ by (\ref{eta}) and (\ref{min-eta}), so that at $t=\sigma^*/(1-\tau_0)$,
\bes
{t}^2 - {|}\by-\bX\hbbeta({t}\lam_0){|}_2^2/n
\ge t^2 - \big(\sigma^*+t\tau_0\big)^2 \ge 0.
\ees
This implies $\sigma^*\ge \hsigma(1-\tau_0)$ by the strict convexity of (\ref{pen-loss-joint}) in $\sigma$.
Thus, the first part of (\ref{th-1-1}) holds. Moreover,
\bes
{|}\bX\hbbeta-\bX\bbeta^*{|}_2\big/n^{1/2}
\le \eta_*^{1/2}(\hsigma\lam_0,\xi)\le \eta_*^{1/2}\{\sigma^*\lam_0/(1-\tau_0),\xi\}
\le\sigma^*\tau_0/(1-\tau_0).
\ees
Finally, since $\mathrm{pr}_{\bbeta,\sigma}[{|}\bX'(\by-\bX\bbeta)/n{|}_\infty\le\sigma\{(2/n)\log p\}^{1/2}]\to 1$,
(\ref{th-1-2}) follows from (\ref{th-1-1}).
\end{proof}

\medskip
The proof of Theorem 2 requires the following lemma.

\begin{lemma}\label{t-dist} Let $T_m$ have the
t-distribution with $m$ degrees of freedom.
Then, there exists $\eps_m\to 0$ such that for all $t>0$
\bel{t-bound}
\mathrm{pr}\big[ T_m^2 > m\{e^{2t^2/(m-1)}-1\} \big]\le (1+\eps_m)e^{-t^2}/(\pi^{1/2}t).
\eel
\end{lemma}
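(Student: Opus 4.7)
The plan is to rewrite the event in terms of a well-chosen monotone transform of $T_m^2$, find that transform's density in closed form, bound the density by the half-Gaussian density, and integrate using the standard Mills-type bound. The key identity is
\bes
\Big\{T_m^2 > m\{e^{2t^2/(m-1)}-1\}\Big\}
= \Big\{Y > t^2\Big\}, \qquad Y := \frac{m-1}{2}\log\!\big(1+T_m^2/m\big),
\ees
so it suffices to estimate $\mathrm{pr}(Y>t^2)$.

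First I would derive the density of $Y$. Starting from the Fisher $F_{1,m}$ density of $T_m^2 = mZ^2/V$ with $Z\sim N(0,1)$ independent of $V\sim\chi^2_m$, the change of variables $y=\frac{m-1}{2}\log(1+x/m)$ (with Jacobian $\frac{2m}{m-1}e^{2y/(m-1)}$) gives after cancellation
\bes
f_Y(y) = \frac{2}{m-1}\cdot\frac{\Gamma((m+1)/2)}{\Gamma(1/2)\Gamma(m/2)}
\cdot\big(e^{2y/(m-1)}-1\big)^{-1/2}\,e^{-y}, \qquad y>0.
\ees
The elementary bound $e^x-1\ge x$ for $x\ge 0$ yields $(e^{2y/(m-1)}-1)^{-1/2}\le\sqrt{(m-1)/(2y)}$. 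Plugging this in gives
\bes
f_Y(y) \le \frac{c_m}{\sqrt{\pi y}}\,e^{-y}, \qquad
c_m := \sqrt{\frac{2}{m-1}}\cdot\frac{\Gamma((m+1)/2)}{\Gamma(m/2)},
\ees
since $\Gamma(1/2)=\sqrt{\pi}$. By Stirling, $\Gamma((m+1)/2)/\Gamma(m/2)=\sqrt{m/2}\,\{1+O(1/m)\}$, so $c_m=1+\eps_m$ with $\eps_m\to 0$.

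Finally I would bound the tail integral. Integration by parts gives
\bes
\int_{t^2}^\infty y^{-1/2}e^{-y}\,dy
= \frac{e^{-t^2}}{t} - \frac{1}{2}\int_{t^2}^\infty y^{-3/2}e^{-y}\,dy
\le \frac{e^{-t^2}}{t},
\ees
so combining the two displays yields
\bes
\mathrm{pr}(Y > t^2) \le \int_{t^2}^\infty \frac{1+\eps_m}{\sqrt{\pi y}}\,e^{-y}\,dy
\le \frac{(1+\eps_m)e^{-t^2}}{\sqrt{\pi}\,t},
\ees
which is (\ref{t-bound}). The only delicate point is extracting the sharp constant $1$ in front of $e^{-t^2}/(\sqrt{\pi}t)$: this hinges on the cancellation between $\sqrt{2/(m-1)}$ and $\Gamma((m+1)/2)/\Gamma(m/2)\sim\sqrt{m/2}$, so the main care is just in making the Stirling expansion precise enough to guarantee $c_m\to 1$. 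Everything else is a routine change of variables and a Mills-ratio integration by parts.
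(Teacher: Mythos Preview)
Your proof is correct. The paper takes a closely related but slightly more direct route: it works with the $t$-density itself, writes $\mathrm{pr}(T_m^2>x^2)$ as an integral over $u\in(x,\infty)$, applies the Mills-ratio trick $\int_x^\infty f(u)\,du \le x^{-1}\int_x^\infty uf(u)\,du$ to evaluate the integral in closed form as a constant times $(1+x^2/m)^{-(m-1)/2}$, observes that this last factor equals $e^{-t^2}$ by the very definition of $x$, and then bounds the remaining prefactor $1/x$ via $e^a-1\ge a$. Your change of variables $Y=\tfrac{m-1}{2}\log(1+T_m^2/m)$ accomplishes the same thing by making the exponential structure $e^{-y}$ explicit in the density from the start; the inequality $e^a-1\ge a$ then enters inside the integral rather than at the boundary, and the Mills step becomes a standard integration by parts on the incomplete-Gamma tail. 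Both routes arrive at exactly the same constant $c_m=\sqrt{2/(m-1)}\,\Gamma((m+1)/2)/\Gamma(m/2)=1+\eps_m$, so the difference is purely one of packaging: your version makes the reduction to a Gaussian-like tail transparent, while the paper's avoids computing a Jacobian and a new density.
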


\begin{proof}[of Lemma \ref{t-dist}]
Let $x=[m\{e^{2t^2/(m-1)}-1\}]^{1/2}$. Since $T_m$ has the $t$-distribution,
\bes
\mathrm{pr}\Big( T_m^2> x^2\Big)
&=& \frac{2\Gamma\{(m+1)/2\}}{\Gamma(m/2)(m\pi)^{1/2}}
\int_x^\infty \Big(1+\frac{u^2}{m}\Big)^{-(m+1)/2}{\rm d}u
\cr &\le & \frac{2\Gamma\{(m+1)/2\}}{x\Gamma(m/2)(m\pi)^{1/2}}
\int_x^\infty \Big(1+\frac{u^2}{m}\Big)^{-(m+1)/2}u{\rm d}u
\cr & = & \frac{2\Gamma\{(m+1)/2\}m}{x\Gamma(m/2)(m\pi)^{1/2}(m-1)}
\Big(1+\frac{x^2}{m}\Big)^{-(m-1)/2}.
\ees
Since $x\ge t\{2m/(m-1)\}^{1/2}$,
\bes
\mathrm{pr}\Big( T_m^2> x^2\Big)
\le \frac{\surd{2}\Gamma\{(m+1)/2\}}{\Gamma(m/2)(m-1)^{1/2}}
\frac{e^{-t^2}}{t\pi^{1/2}}
=(1+\eps_m)\frac{e^{-t^2}}{t\pi^{1/2}},
\ees
where $\eps_m = \{2/(m-1)\}^{1/2}\Gamma\{(m+1)/2\}/\Gamma(m/2) - 1\to 0$
as $m\to\infty$.
\end{proof}

\begin{proof}[of Theorem \ref{th-2}]
We need to express $\tau_*^2$ as a function of $\sigma$ at $\sigma=\sigma^*$
in the proof. Define
\bes
\phi(\sigma) = \lam_0\mu(\sigma\lam_0,\xi)/\sigma,\
\phi_+=\frac{\phi(\sigma^*)\xi}{(\xi+1)\{1-\phi(\sigma^*)\}_+},\
\phi_-=\frac{\phi(\sigma^*)(\xi-1)}{\xi+1}.
\ees
We have $\tau_*^2=\phi(\sigma^*)<1$,
$\phi_-\le\phi(\sigma^*)$ and $\phi_+\le \phi(\sigma^*)/(1-\phi(\sigma^*)$.

(i) Consider $z^*\le (1-\phi_-)\lam_0(\xi-1)/(\xi+1)$.
Let $\lam={t}\lam_0$ and $\bh =\hbbeta(\lam) - \bbeta^*$.
Since ${|}\bX'(\by-\bX\bbeta^*)/n{|}_\infty = z^*\sigma^*$,
the Karush--Kuhn--Tucker condition (\ref{KKT}) gives
\bel{pf-th-1-1}
- (z^*\sigma^*+\lam){|}\bh{|}_1 & \le & (\bX\bh)'\{\by-\bX\bbeta^*+ \by- \bX\hbbeta(\lam)\}/n
\cr &=& (\sigma^*)^2 - {|}\by-\bX\hbbeta(\lam){|}_2^2/n
\cr &=& (\bX\bh)'\{2(\by-\bX\bbeta^*) - \bX\bh\}/n \le 2z^*\sigma^*{|}\bh{|}_1
\eel
as lower and upper bounds for $(\sigma^*)^2 - {|}\by-\bX\hbbeta(\lam){|}_2^2/n$. This
is a key point in the proof.

For $t\ge \sigma^*(1-\phi_-)$, $z^*\sigma^*\le{t}\lam_0(\xi-1)/(\xi+1)=\lam(\xi-1)/(\xi+1)$,
so that (\ref{th-3-2}) in Theorem \ref{th-3} implies ${|}\bh{|}_1\le \mu(t\lam_0,\xi)$.
It follows (\ref{pf-th-1-1}) that for $t = \sigma^*(1-\phi_-)$,
\bes
t^2 - {|}\by-\bX\hbbeta(t\lam_0){|}_2^2/n
&\le& t^2 - (\sigma^*)^2 + 2z^*\sigma^*\mu(t\lam_0,\xi)
\cr &\le&  2t(t - \sigma^*)+ 2t \lam_0(\xi-1)(\xi+1)^{-1}\mu(\sigma^*\lam_0,\xi) = 0,
\ees
due to $\phi_-=(\xi-1)(\xi+1)^{-1}\phi(\sigma^*)
= (\xi-1)(\xi+1)^{-1}\lam_0\mu(\sigma^*\lam_0,\xi)/\sigma^*$.
As in the proof of Theorem \ref{th-1}, we find $\hsigma/\sigma^*\ge 1-\phi_-$
by (\ref{prop-1-1}) and the strict convexity of (\ref{pen-loss-joint}) in $\sigma$.

Now we prove that $\hsigma/\sigma^*\le 1+\phi_+$.
For $t>\sigma^*$, $\mu(t\lam_0,\xi)\le (t/\sigma^*)\mu(\sigma^*\lam_0,\xi)$
by (\ref{mu}). Thus, since
$(\xi-1)/(\xi+1) + 1 = 2\phi_+\{1-\phi(\sigma^*)\}/\phi(\sigma^*)$ and $\phi_+\le (1+\phi_+)\phi(\sigma^*)$,
for $t/\sigma^*=1+\phi_+$, (\ref{pf-th-1-1}) and (\ref{th-3-2}) imply that
\bes
t^2 - {|}\by-\bX\hbbeta(t\lam_0){|}_2^2/n
    &\ge& t^2 - (\sigma^*)^2 - (z^*\sigma^*+t\lam_0)\mu(t\lam_0,\xi)
\cr & \ge & (t+\sigma^*)\sigma^*\phi_+  -  \{(\xi-1)/(\xi+1)+1+\phi_+\}t\lam_0\mu(\sigma^*\lam_0,\xi)
\cr & = & (\sigma^*)^2\big((2+\phi_+)\phi_+  -  [2\phi_+\{1-\phi(\sigma^*)\}/\phi(\sigma^*)+\phi_+](1+\phi_+)\phi(\sigma^*)\big)
\cr & = & (\sigma^*)^2\phi_+\{\phi(\sigma^*)(1+\phi_+)-\phi_+\}>0.
\ees
It follows that $\hsigma/\sigma^*\le 1+\phi_+$ by convexity.

Since $1-\phi_-\le \hsigma/\sigma^*\le 1+\phi_+$,
${|}\hbbeta(\hsigma\lam_0)-\bbeta^*{|}_1
\le \mu(\hsigma\lam_0,\xi)\le \mu(\sigma^*\lam_0,\xi)(1+\phi_+)$.
This completes the proof of (\ref{th-2-1}).

(ii) Let $z_j=\bx_j'(\by-\bX\bbeta^*)/(n\sigma^*)$ with $z^*=\max_{j\le p}|z_j|$.
Under $\mathrm{pr}_{\bbeta^*,\sigma}$, $\ep^*=\by-\bX\bbeta^*$ is a vector of independent and identically distributed normal variables
with zero mean. Since $\sigma^*={|}\by-\bX\bbeta^*{|}/n^{1/2}$,
$z_j/\{(1-z_j^2)/(n-1)\}^{1/2}$ follows a $t$-distribution with $n-1$ degrees of freedom.
Lemma \ref{t-dist} with $m=n-1$ and $t^2=\log(p/\eps)>2$ implies
\bel{thm1-pf-t1}
\mathrm{pr}_{\bbeta^*,\sigma}\Big[\frac{(n-1)z_j^2}{1-z_j^2}>(n-1)\{e^{2t^2/(n-2)}-1\}\Big]
\le \frac{1+\eps_{n-1}}{\pi^{1/2}t}e^{-t^2}=\frac{(1+\eps_{n-1})\eps/p}{\{\pi\log(p/\eps)\}^{1/2}}.
\eel
Since $e^a-1\le \sum_{k=1}^\infty a^k/2^{k-1}=a/(1-a/2)$ for any $0<a<2$,
\bel{thm1-pf-t2}
(n-1)\{e^{2t^2/(n-2)}-1\}\le \frac{2(n-1)t^2/(n-2)}{1-t^2/(n-2)}\le \frac{2(n-1)t^2/n}{1-2t^2/n}.
\eel
The combination of (\ref{thm1-pf-t1}) and (\ref{thm1-pf-t2}) yields
\bes
\mathrm{pr}_{\bbeta^*,\sigma}\big[|z_j|>\{2\log(p/\eps)/n\}^{1/2}\big]
&=&\mathrm{pr}_{\bbeta^*,\sigma}\Big\{\frac{(n-1)z_j^2}{1-z_j^2}>\frac{2(n-1)t^2/n}{1-2t^2/n}\Big\}
\cr&\le&\mathrm{pr}_{\bbeta^*,\sigma}\Big\{\frac{(n-1)z_j^2}{1-z_j^2}>(n-1)(e^{\frac{2t^2}{n-2}}-1)\Big\}
\cr&\le& (1+\eps_{n-1})(\eps/p)/\{\pi\log(p/\eps)\}^{1/2}.
\ees
Since $\lam_0\ge \{(2/n)\log(p/\eps)\}^{1/2}(\xi+1)/\{(\xi-1)(1-\phi_-)\}$,
this bounds the tail probability of $z^*=\max_{j\le p}|z_j|$ by the union bound.
Since $n(\sigma^*/\sigma)^2$ follows the $\chi^2_n$ distribution,
$n^{1/2}(\sigma^*/\sigma-1)$ converges to $N(0,1/2)$ in distribution, which then implies
(\ref{th-2-2}) by (\ref{th-2-1}) under $\phi(\sigma)=o(n^{-1/2})$.
\end{proof}

\begin{proof}[of Theorem \ref{th-mleas}]
Let $h = \hbeta - \beta^*$. It follows from the proof of Theorem \ref{th-2} (i) that
$z^*\le (1-\phi_-)\lam_0(\xi-1)/(\xi+1)\le (\hsigma/\sigma^*)\lam_0(\xi-1)/(\xi+1)$, so that
$\hlam+z^*\sigma^* \le \xi(\hlam - z^*\sigma^*)$.
By (\ref{KKT}), $|x_j'Xh/n| = |x_j'(y - X\hbeta - \varepsilon^*)/n| \ge \hlam - z^*\sigma^*$ for $\hbeta_j\neq 0$.
Let $B\subseteq {\widehat S}\setminus S$ with $|B|\le m$.
Since $\kappa_+(m,S)$ is the upper sparse eigenvalue, $(\hlam - z^*\sigma^*)^2|B|\le \kappa_+(m,S)|Xh|_2^2/n$.
By the basic inequality (\ref{basic}) with $w=\beta^*$,
$|Xh|_2^2/n \le (\hlam+z^*\sigma^*)|h_S|_1$ and $h$ is in the cone $\scrC(\xi,S)$.
Thus, since $|h_S|_1^2\kappa^2(\xi,S)\le |Xh|_2^2|S|/n$ by (\ref{compatible}),
$|Xh|_2^2/n\le (\hlam+z^*\sigma^*)^2|S|/\kappa^2(\xi,S)$.
It follows that $|B|\le \kappa_+(m,S)\xi^2|S|/\kappa^2(\xi,S)<m$.
Since all $B\subseteq {\widehat S}\setminus S$ of size $|B|\le m$ have size $|B|<m$,
${\widehat S}\setminus S$ does not have a subset of size $m$.
This gives the first inequality in (\ref{th-mleas-1}).

Let $P_B$ be the orthogonal projection to the linear span of $(x_j,j\in B)$.
By the definition of $\sigma^*_{m,S}$ and the prediction error bound $|Xh|_2^2/n\le \eta_*(\hlam,\xi)$ in Theorem \ref{th-3},
\bes
\hsigma^2 - \hhsigma^2 = |P_{\widehat S}(y-X\hbeta)|_2^2/n
\le \big(|P_{\widehat S}\,\varepsilon^*|_2+|P_{\widehat S}Xh|_2\big)^2/n
\le \{\sigma^*_{m-1,S} + \surd \eta_*(\hlam,\xi)\}^2.
\ees
This gives the second inequality in (\ref{th-mleas-1}). The prediction error bound follows from
\bes
|X\hhbeta-X\beta^*|_2=|P_{\widehat S}y-X\beta^*|_2
\le |P_{\widehat S}(y-X\hbeta)|_2 + |Xh|_2 \le \{\sigma^*_{m-1,S} + 2 \surd \eta_*(\hlam,\xi)\}\surd n,
\ees
which implies the $\ell_2$ estimation error bound due to
$\kappa_-(m-1,S)|\hhbeta-\beta^*|_2^2 \le |X\hhbeta-X\beta^*|_2^2/n$.
Finally, the probability bound in (\ref{th-mleas-3}) follows directly from an application of the Gaussian
concentration inequality to the chi-squared variables in the union bound.
\end{proof}

\bibliographystyle{biometrika}
\bibliography{scaledlasso20120524}
%\bibliography{scaledlassoREF}
%\bibliography{../../../../tex/referencelist}
%\bibliography{referencelist}

\end{document}